\documentclass[a4paper,11pt,oneside]{report}
\usepackage[MScThesis]{EPFLreport}
\usepackage{xspace}
\usepackage{todonotes}
\usepackage{mathtools}
\usepackage{caption}
\usepackage{thmtools, thm-restate}
\usepackage{layouts}
\usepackage{subcaption}
\usepackage{float}
\usepackage{amsmath,amsfonts,bm}









\def\eqref#1{equation~\ref{#1}}









\def\1{\bm{1}}







\def\vzero{{\bm{0}}}

\def\vmu{{\bm{\mu}}}
\def\vsigma{{\bm{\sigma}}}
\def\vtheta{{\bm{\theta}}}
\def\veta{{\bm{\eta}}}
\def\vphi{{\bm{\phi}}}
\def\va{{\bm{a}}}
\def\vb{{\bm{b}}}

\def\vf{{\bm{f}}}
\def\vg{{\bm{g}}}

\def\vk{{\bm{k}}}

\def\vm{{\bm{m}}}

\def\vp{{\bm{p}}}

\def\vr{{\bm{r}}}

\def\vv{{\bm{v}}}

\def\vx{{\bm{x}}}
\def\vy{{\bm{y}}}


\def\mA{{\bm{A}}}

\def\mF{{\bm{F}}}

\def\mI{{\bm{I}}}
\def\mJ{{\bm{J}}}
\def\mK{{\bm{K}}}

\def\mM{{\bm{M}}}

\def\mS{{\bm{S}}}

\def\mW{{\bm{W}}}

\def\mY{{\bm{Y}}}

\def\mLambda{{\bm{\Lambda}}}
\def\mSigma{{\bm{\Sigma}}}

\DeclareMathAlphabet{\mathsfit}{\encodingdefault}{\sfdefault}{m}{sl}
\SetMathAlphabet{\mathsfit}{bold}{\encodingdefault}{\sfdefault}{bx}{n}
\newcommand{\tens}[1]{\bm{\mathsfit{#1}}}

\def\tH{{\tens{H}}}

\def\tT{{\tens{T}}}


\def\gP{{\mathcal{P}}}










\newcommand{\R}{\mathbb{R}}

\newcommand{\softmax}{\mathrm{softmax}}
\newcommand{\sigmoid}{\sigma}

\newcommand{\KL}{D_{\mathrm{KL}}}

\newcommand{\Cov}{\mathrm{Cov}}
\newcommand{\diag}{\mathrm{diag}}


\DeclareMathOperator*{\argmax}{arg\,max}
\DeclareMathOperator*{\argmin}{arg\,min}

\newcommand{\rnd}[1]{\left(#1\right)}
\newcommand{\sqr}[1]{\left[#1\right]}
\newcommand{\crl}[1]{\left\{#1\right\}}

\newcommand{\myexpect}{\mathbb{E}}
\newcommand{\myvar}{\mathbb{V}}
\newcommand{\mycov}{\textrm{Cov}}

\newcommand{\logits}{f}
\newcommand{\vlogits}{\vf}
\newcommand{\linlogits}{\vf_{\textrm{lin}}}
\newcommand{\linlink}{\vg^{-1}_{\textrm{lin}}}

\newcommand{\fgp}{\vf_{\textrm{GP}}}
\newcommand{\GP}{\mathcal{GP}}
\newcommand{\param}{\vtheta}
\newcommand{\data}{\mathcal{D}}

\newcommand{\given}{\mbox{$|$}}

\newtheorem{theorem}{Theorem}[chapter]

\newcommand{\gauss}{\mathcal{N}}
\newcommand{\maparam}{\param_\textrm{MAP}}

\newcommand{\vkappa}{\bm{\kappa}}

\newcommand{\elbo}{\mathcal{L}}
\newcommand{\natparam}{\veta^{(1)}}
\newcommand{\natparas}{\veta^{(2)}}
\newcommand{\exparam}{\vphi^{(1)}}
\newcommand{\exparas}{\vphi^{(2)}}

\numberwithin{equation}{chapter}

\title{Disentangling the Gauss-Newton Method and Approximate Inference for Neural Networks}
\author{Alexander Immer}
\supervisor{Dr.~Emtiyaz~Khan\textsuperscript{$*$}}
\adviser{Prof.~Dr.~Matthias Grossglauser\textsuperscript{$\dagger$}\\
Prof.~Dr.~Patrick Thiran\textsuperscript{$\dagger$}}

\dedication{
\begin{raggedleft}
    You can't connect the dots looking forward;\\
    You can only connect them looking backwards.\\
    --- Steve Jobs\\
\end{raggedleft}
}
\acknowledgments{
I conducted the work presented in this thesis during my internship at RIKEN AIP in Tokyo in the approximate Bayesian inference team.
I am thankful that Emtiyaz Khan, on the day of my arrival in Tokyo, overwhelmed but also trusted me with his idea to combine approximate inference for neural networks with Gaussian process inference.
In three intensive months, Ehsan, Maciej, Emti and I finished the paper ``Approximate Inference Turns Deep Networks into Gaussian Processes''.I also want to thank Emti for giving me the freedom to pursue my own ideas after our paper, which ultimately led to this thesis that builds on it.
This work would not have been possible without all the great discussions with other interns and members of the team as well as their useful feedback.
In particular, I would like to thank Dharmesh Tailor, Ehsan Abedi, Maciej Korzepa, Matthias Bauer, Pierre Alquier, Pierre Oreistein, Pingbo Pan, Roman Bachmann,  Runa Eschenhagen, and Xiangming Meng.
I would also like to thank Olivier Dousse who inspired me to do research and Victor Kristof, Matthias Grossglauser, and Patrick Thiran for their supervision and support at EPFL.

I am especially thankful to my friends and family who made my time at EPFL and RIKEN so enjoyable.
I would like to thank my friends Danijar Hafner, Fabian Windheuser, Thomas Kellermeier, Tilman Kemeny, and Willi Raschkowski for visiting me in Tokyo.
I would also like to thank my friends at EPFL, Christian Sciuto, Federico Pucci, Lorenzo Tarantino, Martin Josifoski, and Mazen Al-ahdal for the amazing time in Lausanne.
Thank you, Arian Ghaemi, Ayaka Kitani, and Runa Eschenhagen for the incredible time in Tokyo.
Thank you, Ayaka, for your love, support, and encouragement.
I would also like to thank my friends in Berlin, who make me feel at home every time I return or visit.
In particular Jan Westphal, who I have not mentioned yet.
Last but not least, I would like to thank my family for their love and support in every decision I make.

}

\begin{document}
\maketitle

\begin{abstract}
Deep neural networks achieve state-of-the-art performance in many real-world machine learning problems and alleviate the need to design features by hand.
However, their flexibility often comes at a cost.
Neural network models are hard to interpret, often overconfident, and do not quantify how probable they are given a dataset.
The Bayesian approach to infer neural networks is one way to tackle these issues.
However, exact Bayesian inference for neural networks is intractable.
Therefore, \emph{Bayesian deep learning} combines approximate inference and optimization methods to design efficient methods that provide an approximate solution.
Nonetheless, the combination of both methods is often not well understood.

In this thesis, we disentangle the generalized Gauss-Newton and approximate inference for Bayesian deep learning.
The generalized Gauss-Newton method is an optimization method that is used in several popular Bayesian deep learning algorithms.
In particular, algorithms that combine the Gauss-Newton method with the Laplace and Gaussian variational approximation have recently led to state-of-the-art results in Bayesian deep learning.
While the Laplace and Gaussian variational approximation have been studied extensively, their interplay with the Gauss-Newton method remains unclear.
For example, we know that both approximate inference methods compute a Gaussian approximation to the posterior.
However, it is not clear how the Gauss-Newton method impacts the underlying probabilistic model or posterior approximation.
Additionally, recent criticism of priors and posterior approximations in Bayesian deep learning further urges the need for a deeper understanding of practical algorithms.

The individual analysis of the Gauss-Newton method and Laplace and Gaussian variational approximations for neural networks provides both theoretical insight and new practical algorithms.
We find that the Gauss-Newton method simplifies the underlying probabilistic model significantly.
In particular, the combination of the Gauss-Newton method with approximate inference can be cast as inference in a linear or Gaussian process model.
We find that the Gauss-Newton method turns the original model locally into a linear or Gaussian process model.
The Laplace and Gaussian variational approximation can subsequently provide a posterior approximation to these simplified models.
This new disentangled understanding of recent Bayesian deep learning algorithms also leads to new methods:
first, the connection to Gaussian processes enables new function-space inference algorithms.
Second, we present a marginal likelihood approximation of the underlying probabilistic model to tune neural network hyperparameters.
Finally, the identified underlying models lead to different methods to compute predictive distributions.
In fact, we find that these prediction methods for Bayesian neural networks often work better than the default choice and solve a common issue with the Laplace approximation.

\end{abstract}


\par
\cleardoublepage
\chapter*{Mathematical Notation and Abbreviations}
\markboth{Mathematical Notation and Abbreviations}{Mathematical Notation and Abbreviations}
\addcontentsline{toc}{chapter}{Mathematical Notation and Abbreviations}
\begin{table}[ht]
  \centering
  \begin{tabular}{l l}
  Symbol & Explanation\\
  \midrule
  $\R, \R_+$ & set of real and positive real numbers\\
  $x$ & scalar variable \\
  $\vv$ & vector variable with scalar entries $v_i$ \\
  $\mM$ & matrix variable with scalar entries $\mM_{ij}$ \\
  $\tT$ & tensor variable with scalar entries $\tT_{ijk}$\\
  $f(\vx;\param)$ & function mapping $\vx$ to some output parameterized by $\param$, for example a neural network\\
  $\nabla_\param f(\param)$ & the gradient with individual entries $ \sqr{ \nabla_\param f(\param) }_i = \frac{\partial f(\param)}{\partial \theta_i}$\\
  $\nabla_{\param \param}^2 f(\param)$ & the Hessian with entries $\sqr{ \nabla_{\param \param}^2 f(\param) }_{ij} = \frac{\partial^2 f(\param)}{\partial \theta_i \partial \theta_j}$\\
  $\nabla_\param f(\param_*)$ & gradient evaluated at $\param = \param_*$, same applies for Hessian\\
  $\langle \cdot, \cdot \rangle$ & scalar, vector, or Frobenius inner product depending on the context\\
  $\gauss (\vmu, \mSigma)$ & multivariate normal distribution with mean $\vmu$ and covariance $\mSigma$\\
  $\gauss (\param; \vmu, \mSigma)$ & $\param$ is distributed according to $\gauss (\vmu, \mSigma)$.\\
  \vspace{1em}\\
  Abbreviation & Meaning\\
  \midrule
  GGN, GN & (generalized) Gauss-Newton \\
  GVA & Gaussian variational approximation \\
  GP & Gaussian process \\
  GLM & generalized linear  model \\
  GGPM & generalized Gaussian process model \\
  BLR & Bayesian linear regression (model) \\
  VOGGN & variational online generalized Gauss-Newton\\
  OGGN & online generalized Gauss-Newton\\
  LGVA & linearized Gaussian variational approximation\\
  \end{tabular}
\end{table}

\maketoc
\chapter{Introduction}
\label{sec:introduction}
The field of \emph{machine learning} deals with algorithms that teach computers to make predictions or take actions in novel scenarios based on past experience.
In the setting of \emph{supervised learning}, the past experience consists of observed data points comprising inputs and labels.
For example, the input could be an image of an object and the label its name.
A machine learning algorithm can teach, or \emph{train}, the computer to predict the label of unseen images using a \emph{learned} model.
In the \emph{probabilistic machine learning} framework, we model uncertainties about our choice of model, i.e., we don't restrict ourselves to a single bet.
Therefore, \emph{inference} in this framework is about identifying a distribution over models that explain the past experience well and therefore generalize to future observations.
Having a distribution instead of a single model enables uncertainty quantification and model comparison.

Apart from the data, a probabilistic machine learning algorithm comprises two key components:
a probabilistic model and an inference algorithm.
In this work, we focus on probabilistic models of neural networks and scalable inference algorithms for such models.
We disentangle common inference algorithms that combine \emph{approximate inference} with the \emph{generalized Gauss-Newton} (GGN) optimization method and investigate how both affect the underlying probabilistic model.
We show that, locally, these techniques simplify the probabilistic model drastically.
Investigating these simplified underlying models helps to improve our understanding of Bayesian deep learning.
We further exploit this understanding to enhance Bayesian deep learning.
In fact, the findings presented here lead to more accurate predictions of neural network models inferred with approximate inference methods.
The disentanglement of the Gauss-Newton method and approximate inference further enables novel ways to compute the posterior and posterior predictive of a neural network in the function-space and gives rise to a marginal likelihood approximation.
Further, we identify a new variational inference algorithm and provide experimental support for our theoretical results and hypotheses.

\section{Probabilistic Models and Inference}%
\label{sec:prob_model_infer}

More formally, the combination of data and a corresponding probabilistic model of the data can be described as follows.
In the supervised setting, we are given a dataset $\data = \{(\vx_i, \vy_i)\}_{i=1}^N$ of $N$ independent and identically distributed input $\vx_i$ and label $\vy_i$ pairs.
For now, let both input and output be some abstract quantities.
In the previous example, $\vx_i$ corresponds to an image while $\vy_i$ denotes the corresponding object name.
A probabilistic model consists of \emph{prior} and \emph{likelihood}.
In the case of probabilistic neural networks, we usually pose a \emph{prior} over the parameters $\param$, i.e., $p(\param)$.
The \emph{likelihood} of observing a label $\vy$ for a given parameter $\param$ and input $\vx$ can be written as $p(\vy \given \param, \vx)$.
Since all data points are i.i.d., we write $p(\data \given \param)$ for the product of all likelihoods over the entire dataset.

Turning to the problem of learning, we deal with the \emph{maximum a posteriori} (MAP) estimate and Bayesian inference.
In Bayesian inference, we compute the \emph{posterior distribution} $p(\param \given \data)$ over the parameter using Bayes rule
\begin{equation}
  \label{eq:bayes_rule}
  p(\param \given \data) = \frac{p(\data \given \param) p(\param)}{\int p(\data \given \param) p(\param) d\param} = \frac{p(\data, \param)}{p(\data)},
\end{equation}
where the normalization constant in the denominator is called the \emph{marginal likelihood}.
In contrast to Bayesian inference, MAP estimation solely captures the mode, i.e., the most probable parameter, of the posterior.
Ignoring the normalization constant, this can be achieved by maximizing the joint distribution:
\begin{equation}
  \label{eq:MAP}
  \param_\textrm{MAP} = \argmax_\param p(\data, \param).
\end{equation}
MAP estimation is typically much easier and computationally convenient while Bayesian inference is in many cases intractable due to the integration.
One could distinguish the two types of inference as follows:
Bayesian inference is \emph{learning by integration} while MAP estimation is \emph{learning by optimization}.
Computationally, optimization is much more convenient than integration.

Ignoring the computational burden, we would often prefer Bayesian inference since the posterior distribution captures more information than the MAP estimate.
While the MAP estimate provides only \emph{local} information at $\param_\textrm{MAP}$, the posterior distribution carries \emph{global} information due to the integration over the parameter space captured in the marginal likelihood $p(\data)$.
In the case of linear regression, it is clear that Bayesian inference has advantages over the MAP estimate:
we can quantify uncertainty of our predictions and compare the marginal likelihood between models to obtain the one explaining the data best.
Nonetheless, computational feasibility is a key requirement of machine learning algorithms.
Therefore, the MAP estimator is often the standard choice as it provides good results at fraction of the cost and effort.

\section{Bayesian Deep Learning}%
\label{sec:BDL}

Bayesian neural networks are probabilistic models where the likelihood is parameterized by a neural network and the prior is a distribution over the neural network parameters.
Bayesian inference in these models is particularly challenging and therefore MAP estimation has played the major role in the past successes of \emph{deep learning}~\cite{deng2014deep, lecun2015deep}.
In contrast, the alternative strand of research termed \emph{Bayesian deep learning} deals with \emph{approximate Bayesian inference} for probabilistic neural network models.
Instead of obtaining an exact posterior of the Bayesian inference problem, an approximation to the posterior is constructed~\cite{blundell2015weight, khan2018fast, wang2016towards, zhang2018noisy}.
The key aspect of Bayesian deep learning is to maintain computational and performance advantages of deep learning while providing parameter uncertainties that enable predictive uncertainties.

Bayesian deep learning algorithms rely on approximate inference techniques as well as scalable optimization algorithms.
The interplay of both enables to approximate the posterior distribution efficiently.
In particular, the combination of the generalized Gauss-Newton method from the optimization literature and Gaussian posterior approximations has recently made Bayesian deep learning competitive with traditional deep learning and enabled new applications based on uncertainty~\cite{khan2018fast, osawa2019practical, ritter2018scalable, zhang2018noisy}.
However, it is unclear how the combination of these approximations impacts the underlying inference problem.
In this work, we address this problem by disentangling algorithms that make use of Gaussian posterior approximations and the generalized Gauss-Newton method in Bayesian deep learning.
Decoupling the individual methods and analyzing them individually can potentially allow to understand the algorithms better, improve them, and fix existing pathologies.

\section{Outline of the Thesis}%
\label{sec:outline}


In \autoref{sec:background}, the necessary background on probabilistic neural network models, approximate inference, and the generalized Gauss-Newton method is introduced.
In \autoref{sec:laplace}, we disentangle the combination of the Laplace approximation with the generalized Gauss-Newton method for optimization.
This allows to understand approximate inference for neural networks with linear and Gaussian process models.
Further, we introduce new methods to compute the posterior predictive and marginal likelihood approximation of neural network models.
Chapter~\ref{sec:vi} introduces a new variational inference algorithm and, in the same spirit as Chapter~\ref{sec:laplace}, describes approximate inference for neural networks via simpler linear and Gaussian process models.
Chapter~\ref{sec:experiments} provides experiments that explain and complement the theoretical connections:
we show how to tune hyperparameters using the marginal likelihood and that the posterior predictive is in fact greatly influenced by the application of the Gauss-Newton method.
The prediction methods introduced here fix a common problem with the Laplace approximation.
The kernel of the identified Gaussian process formulation is further used to explain neural network predictions.
Lastly, we discuss related and future work and conclude the thesis in Chapter~\ref{sec:discussion}.


\chapter{Background}
\label{sec:background}
In this chapter, we briefly introduce the necessary concepts and theory to follow the rest of this work.
We first introduce probabilistic neural networks as the underlying models that we want to infer.
In particular, we introduce likelihoods for supervised learning problems and discuss the choice of prior.
Bayesian deep learning combines approximate inference methods and optimization algorithms, both of which we introduce in the end of this chapter.
On the inference side, we introduce the Laplace and Gaussian variational approximation and relate them to each other.
Lastly, we introduce the \emph{generalized Gauss-Newton} approximation to the Hessian from the optimization literature.
This enables scalable approximate inference.

Formally, we denote a neural network as $\vf(\vx; \param): \R^D \times \R^P \rightarrow \R^K$ that maps an input $\vx \in \R^D$ to output $\vlogits \in \R^K$ with parameters $\param \in \R^P$.
We do not restrict ourselves to any form of neural network.
In fact, any parametric function that is differentiable in $\param$ at least once can be used.
In some cases, we have a scalar output, i.e., $f(\vx;\param)=f \in \R$.

The dataset in a supervised learning scenario is given as pairs of inputs $\vx_i \in \R^D$ and labels $\vy_i \in \R^K$.
As introduced in Chapter~\ref{sec:introduction}, the entire dataset of size $N$ is then given by $\data=\{(\vx_i, \vy_i)\}_{i=1}^N$.
We assume that all data points are drawn independently and from the same distribution.
Therefore, we have the following likelihood in a neural network model:
\begin{equation}
  \label{eq:likelihood_factorized}
  p(\data \given \param) = \prod_{i=1}^N p(\vy_i \given \vf(\vx_i;\param)).
\end{equation}
In deep learning, we obtain a MAP estimate by optimizing the corresponding objective in \autoref{eq:MAP}.
For computational reasons, we maximize the log joint distribution which leads to the more convenient objective
\begin{equation}
  \label{eq:MAP_log}
  \param_\textrm{MAP} = \argmax_\param \sum_{i=1}^N \log p(\vy_i \given \vf(\vx_i ;\param)) + \log p(\param),
\end{equation}
which is also known as the \emph{empirical risk minimization} objective.
In the ERM setting, the likelihood acts as a loss per data point and the prior can be understood as a regularizer.
Next, we specify a particular family of likelihoods that is sufficient for deep learning and possesses useful theoretical properties.
After that, we introduce approximate Bayesian inference and the generalized Gauss-Newton method.

\section{Probabilistic Neural Networks for Supervised Learning}
\label{sec:GNMs}
We have introduced neural networks that map an input data point to an output value.
A probabilistic neural network model additionally consists of a likelihood and a prior.
We introduce a family of likelihoods, in particular, those of \emph{generalized linear models} (GLMs), that give rise to common losses used in deep learning.
These exponential family likelihoods possess simplifying theoretical properties that make later results more interpretable.
In the end of the section, we discuss the choice of prior.

\begin{figure}[t]
    \centering
    \includegraphics[height=5em]{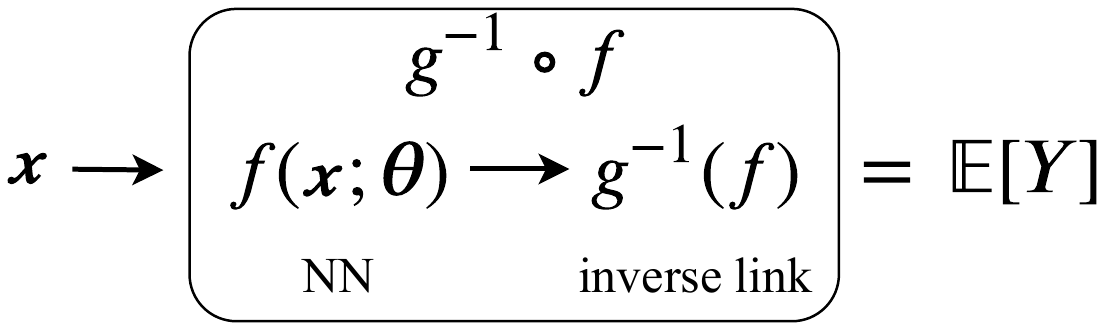}
    \caption{Illustration of the composition of neural network and inverse link function.
    The inverse link function gives the mean of the modelled response variable $Y$.}
    \label{fig:GNM_illustration}
\end{figure}

Generalized linear model likelihoods specify a distribution over the output labels.
In particular, we model a response random variable $Y$ or vector $\mY$ for a given input and neural network parameter.
GLM likelihoods conveniently express the mean of the response via an invertible \emph{link function} $g(\cdot)$, i.e., $\myexpect \sqr{ \mY } = \vg^{-1} \rnd{ \vf }$.
Further, the derivatives of the log-likelihood with respect to the function $\vf$ are directly related to the moments of $\mY$.
This property makes the theoretical developments more intuitive and insightful.
The likelihoods of generalized linear models are restricted to exponential family distributions~\cite{murphy2012machine}.
Every exponential family can be written in the following \emph{natural form} as
\begin{equation}
\label{eq:glm_likelihood}
  p(\vy \given \vf) = h(\vy) \exp{\crl{\langle T(\vy), \vf \rangle - A \rnd{\vf}}},
\end{equation}
where $h(\vy)$ is the base measure, $T(\vy)$ the sufficient statistics, $A(\vf)$ the log-cumulant, and $\vf$ the natural parameter.
We restrict ourselves to forms where $T(\cdot)$ is the identity, i.e., $T(\vy) = \vy$.
Therefore, we have
\begin{equation}
  \label{eq:glm_likelihood_id}
  p(\vy \given \vf) = h(\vy) \exp{\crl{ {\langle \vy, \vlogits \rangle - A \rnd{\vlogits}}}}.
\end{equation}

The derivative of the log likelihood with respect to the natural parameter $\vf$ is particularly convenient.
The first derivative forms a residual between the observed label and the mean of the response variable and therefore specifies the link function.
The second derivative directly relates to the variance of the modelled response variable.
Further, we identify the mean of the response variable and therefore the inverse link function as the derivative of the log cumulant $A(\vf)$.
The following Lemma formalizes these properties.

\begin{restatable}{lemma}{GLMS}
\label{thm:exp_derivatives}
Let $p(\vy \given \vf)$ be an exponential family distribution of the form in \autoref{eq:glm_likelihood_id} and let $\mY$ denote the corresponding random variable.
The first and second derivative of the log likelihood take the simple form
\begin{align}
\label{eq:glm_derivatives}
  \nabla_\vf \log p(\vy \given \vf) =
  &=   \vy - \nabla_\vf A(\vf)
  =  \vy - \myexpect \sqr{\mY}=  \vy - \vg^{-1} (\vlogits)
  =: \vr(\vy, \vlogits),\\
  \nabla_{\vf \vf}^2 \log p(\vy \given \vlogits) &= -\nabla_{\vf \vf}^2 A(\vf)
  = - \myvar \sqr{ \mY }
  =:- \mLambda(\vlogits),
\end{align}
where we defined the residual $\vr(\vy, \vf) \in \R^K$ and second derivative, or Hessian, of the negative log likelihood $\mLambda(\vf) \in \R^{K\times K}$.
We have thus identified the inverse link that gives the mean of the response variable as the first derivative of the log cumulant.
In the case of an \emph{overdispersed} exponential family~\cite{murphy2012machine}, the variance of the response variable is further scaled by the dispersion parameter $\sigma^2$ while the derivatives are divided by it.
We have this case, for example, in a Gaussian likelihood (see \autoref{tab:glms}).
\end{restatable}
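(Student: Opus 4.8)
The plan is to differentiate the log-density directly and use the fact that the density integrates to one to identify the derivatives of the log-cumulant $A(\vf)$ with the moments of $\mY$. Starting from \autoref{eq:glm_likelihood_id}, the log-likelihood is $\log p(\vy \given \vf) = \log h(\vy) + \langle \vy, \vf \rangle - A(\vf)$, so taking $\nabla_\vf$ of both sides is immediate: the base-measure term vanishes, $\langle \vy, \vf \rangle$ contributes $\vy$, and we are left with $\vy - \nabla_\vf A(\vf)$. Likewise, a second differentiation gives $-\nabla^2_{\vf\vf} A(\vf)$ since $\vy - \nabla_\vf A(\vf)$ has constant first term. This establishes the \emph{form} of \autoref{eq:glm_derivatives}; what remains is to identify $\nabla_\vf A(\vf)$ with $\myexpect[\mY]$ and $\nabla^2_{\vf\vf} A(\vf)$ with $\myvar[\mY]$.

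For that identification, I would use the normalization condition $\int h(\vy) \exp\{\langle \vy, \vf\rangle - A(\vf)\}\, d\vy = 1$, i.e. $\exp\{A(\vf)\} = \int h(\vy)\exp\{\langle \vy,\vf\rangle\}\, d\vy$. Differentiating both sides with respect to $\vf$ (justified by dominated convergence, which I will invoke as a regularity assumption standard for exponential families, see \cite{murphy2012machine}) yields $\exp\{A(\vf)\}\,\nabla_\vf A(\vf) = \int \vy\, h(\vy)\exp\{\langle \vy,\vf\rangle\}\, d\vy$, and dividing by $\exp\{A(\vf)\}$ gives $\nabla_\vf A(\vf) = \int \vy\, p(\vy \given \vf)\, d\vy = \myexpect[\mY]$. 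Differentiating once more produces $\nabla^2_{\vf\vf} A(\vf) = \myexpect[\mY\mY^\top] - \myexpect[\mY]\myexpect[\mY]^\top = \myvar[\mY]$, the usual cumulant-generating-function identity. Combining with the first paragraph, $\nabla_\vf \log p(\vy\given\vf) = \vy - \myexpect[\mY]$, and since the link function is \emph{defined} by $\myexpect[\mY] = \vg^{-1}(\vf)$, this equals $\vy - \vg^{-1}(\vf) =: \vr(\vy,\vf)$; similarly $\nabla^2_{\vf\vf}\log p(\vy\given\vf) = -\myvar[\mY] =: -\mLambda(\vf)$.

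Finally, for the overdispersed case I would repeat the argument with the density carrying a dispersion parameter $\sigma^2$, where the natural-form exponent becomes $\tfrac{1}{\sigma^2}(\langle \vy,\vf\rangle - A(\vf))$ (up to the base measure absorbing $\sigma^2$-dependent factors). The same two differentiations then give $\nabla_\vf \log p = \tfrac{1}{\sigma^2}(\vy - \nabla_\vf A(\vf))$ and $\nabla^2_{\vf\vf}\log p = -\tfrac{1}{\sigma^2}\nabla^2_{\vf\vf} A(\vf)$, while the normalization identity now reads $\nabla^2_{\vf\vf} A(\vf) = \tfrac{1}{\sigma^2}\myvar[\mY]$ — so the variance picks up a factor $\sigma^2$ and the derivatives are scaled by $1/\sigma^2$, exactly as claimed, with the Gaussian likelihood as the canonical example (Table~\ref{tab:glms}).

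The only genuinely delicate point is the interchange of differentiation and integration used to pass from the normalization identity to the moment formulas; everything else is elementary calculus. I expect to handle this by stating the standard smoothness/interior-of-the-natural-parameter-space hypothesis for exponential families, under which the interchange is well known to be valid, rather than re-deriving it.
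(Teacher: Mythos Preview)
Your proposal is correct and follows essentially the same route as the paper's own proof: both rewrite $A(\vf)$ via the normalization identity $\exp\{A(\vf)\}=\int h(\vy)\exp\{\langle \vy,\vf\rangle\}\,d\vy$, differentiate under the integral sign (invoking dominated convergence), and read off $\nabla_\vf A=\myexpect[\mY]$ and $\nabla^2_{\vf\vf}A=\myvar[\mY]$. Your version is in fact slightly more complete, since you carry the vector notation throughout and spell out the overdispersed case, whereas the paper's appendix treats only the scalar, non-dispersed case and remarks that the extension is analogous.
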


\begin{table}
  \centering
  \begin{tabular}{l l l l l l}
    Distribution & $\sigma^2$ & $\myexpect \sqr{\mY} = \vg^{-1}(\vf)$ & $\vr(\vy, \vlogits)$  & $\myvar \sqr{\mY}$ & $\mLambda(\vf)$ \\
    \midrule
    Gaussian  & $\sigma^2$ & $f$ & $\sigma^{-2}(y - f)$ & $\sigma^2$ & $\sigma^{-2}$ \\
    Multivariate Gaussian  & $\mSigma$ & $\vf$ & $\mSigma^{-1} (\vy - \vf)$ & $\mSigma$ & $\mSigma^{-1}$ \\    Bernoulli & 1 & $ \frac{1}{1+e^{-f}} = \sigmoid(f)$ & $y - \sigmoid(f)$ & $\sigmoid(f) (1 - \sigmoid(f))$ & $\myvar \sqr{ \mY }$ \\
    Categorical & 1 & $\softmax(\vf) = \vp(\vf)$ & $\vy - \vp(\vf)$ & $\diag(\vp) - \vp \vp^\top$ & $\myvar \sqr{ \mY }$ \\
    Poisson & 1 & $ \exp (f)$ & $y - \exp (f)$ & $\exp(f)$ & $\myvar \sqr{ \mY }$\\
  \end{tabular}
\caption{Most common likelihoods of generalized linear models.
For each distribution, we list the corresponding \emph{dispersion} parameter, the mean response, the residual, the variance of the response, and the Hessian.
In the context of deep learning, the Bernoulli and categorical likelihoods are common due to their application to classification problems.}
\label{tab:glms}
\end{table}

A proof of the Lemma can be found in Appendix~\ref{cha:proof_of_glm_log_likelihood_derivatives}.
In \autoref{tab:glms}, we list the most notable examples of likelihoods.
In this work, we mostly deal with the Gaussian univariate and Bernoulli likelihood.
Both naturally extend to their multivariate counter-parts, the multivariate Gaussian and categorical distribution.
In the empirical risk minimization perspective, a Gaussian likelihood yields a least-squares loss used for regression problems.
The Bernoulli and categorical distributions give rise to the commonly used \emph{cross-entropy} loss for classification~\cite{goodfellow2016deep}.

As prior we choose a multivariate Gaussian in line with most works on Bayesian neural networks~\cite{blundell2015weight, khan2018fast, zhang2018noisy}.
In the MAP estimation framework, a Gaussian prior corresponds to $\ell_2$ regularization commonly known as \emph{weight decay} in deep learning~\cite{bishop2006pattern, goodfellow2016deep, murphy2012machine}.
We therefore have
\begin{equation}
  \label{eq:prior}
\param \sim p(\param) = \gauss \rnd{\vmu_0, \mSigma_0},
\end{equation}
where $\vmu_0 \in \R^P$ is the mean and $\mSigma_0 \in \R^{P\times P}$ the covariance.
For neural networks, the mean is typically chosen to be zero and the covariance is either diagonal, i.e., $\vsigma_0^2 \in \R_+^P$, or scalar $\sigma_0^2 \in \R_+$.
That is because it is nontrivial to specify a prior over the parameters of a neural network and the Gaussian is a comparably agnostic default choice~\cite{mackay1992bayesian, nalisnick2018priors, neal2012bayesian}.
It is important to note that the prior should be understood as a distribution in Bayesian inference and as a regularizer in MAP estimation.
In \autoref{sec:laplace}, we pick up on this distinction in the context of approximate Bayesian inference.

\section{Approximate Bayesian Inference}%
\label{sec:ABI}

Exact Bayesian inference in a probabilistic neural network model is typically intractable.
Therefore, we need to resort to approximate inference methods that provide scalable and computable alternatives to the marginal likelihood.
In the context of networks, the most scalable and practical techniques are variants of the \emph{Laplace}~\cite{foresee1997gauss, mackay1992bayesian, ritter2018scalable} and \emph{Gaussian variational} (GVA) approximation~\cite{blundell2015weight, graves2011practical, khan2018fast, zhang2018noisy}.
In this work, we therefore focus on these two approximations.
Both, the Laplace approximation and the GVA turn the problem of integration into a problem of optimization followed by a simple closed-form integration.
Other approximate inference methods such as \emph{Markov chain Monte Carlo}~\cite{neal1993probabilistic, neal2012bayesian} or \emph{expectation} and \emph{belief propagation}~\cite{minka2001expectation,pearl1986fusion} are typically not as scalable and are therefore rarely used for neural networks.

The Laplace and Gaussian variational approximation both construct a Gaussian approximation to the true posterior distribution.
The Laplace uses a second-order approximation of the log joint distribution to approximate the marginal likelihood.
In contrast, the GVA maximizes the variational lower bound to the marginal likelihood.
Both methods give rise to a posterior approximation $q (\param)$ and we have
\begin{equation}
  \label{eq:ABI}
q(\param) = \gauss \rnd{\param ; \vmu, \mSigma }  \approx p(\param \given \data),
\end{equation}
where $\vmu \in \R^P$ and $\mSigma \in \R^{P \times P}$ are the free parameters.
Typically, the variational approximation is preferred because it can capture the shape of the posterior better~\cite{bishop2006pattern, murphy2012machine}.
After introducing both methods, we briefly show how they are related.

\subsection*{The Laplace Approximation}%
\label{sub:laplace}
The Laplace approximation is a heuristic that fits a Gaussian distribution locally at the MAP estimate (\autoref{eq:MAP_log}).
Therefore, the Laplace approximation comprises two steps:
first, we obtain the MAP estimate $\param_\textrm{MAP}$ using an optimization method and then we approximate the log joint distribution to the second-order using a Taylor expansion.
The second-order approximation allows us to compute the marginal likelihood in closed form.
The Laplace approximation therefore provides an approximation to the posterior and the marginal likelihood.

Since the gradient at the MAP is zero, i.e. $\nabla_\param \log p(\data, \maparam) = \vzero$, the second-order Taylor approximation at this point is simple.
We have
\begin{align}
  \label{eq:taylor_laplace}
\log p(\data, \param) &\approx \log p(\data, \maparam) + \frac12 \rnd{\param - \maparam}^\top \nabla_{\param \param}^2 \log p(\data, \maparam)  \rnd{\param - \maparam}.
\end{align}
Then, computing the marginal likelihood of the approximated model has a closed-form solution.
Using the normalization properties of a multivariate normal distribution, we obtain the Laplace approximation to the marginal likelihood
\begin{align}
  \label{eq:laplace_marglik}
  p(\data) &\approx p(\data, \maparam) \rnd{ 2\pi }^{ \frac{P}{2} }  \det(-\nabla_{\param \param}^2 \log p(\data, \maparam))^{-\frac12}.
\end{align}
Exponentiating the approximation to the joint distribution in \autoref{eq:taylor_laplace} and dividing it by the marginal likelihood, we identify the Laplace approximation as a Gaussian distribution with mean and covariance given by
\begin{equation}
  \label{eq:laplace}
  \vmu = \maparam \quad \textrm{and} \quad \mSigma = [-\nabla_{\param \param}^2 \log p(\data, \maparam)]^{-1}.
\end{equation}

The Laplace approximation is a practical method for approximate Bayesian inference because we only need to find a MAP estimate and compute the Hessian of the log joint distribution at that estimate.
Deep learning optimizers provide effective means to obtain MAP estimates.
For large networks, however, computing the Hessian for a neural network model is permissively complex in terms of storage and computation.
Further, it might even give undesirable results:
the Hessian at the MAP we obtain is not necessarily a positive definite matrix and might therefore not be invertible~\cite{sagun2016singularity, sagun2017empirical}.
Typically, this problem is tackled by optimization methods that guarantee a positive semi-definite Hessian approximation.
For example, Gauss-Newton methods~\cite{hartley1961modified, khan2018fast, ritter2018scalable} ensure an invertible Hessian approximation.
In \autoref{sec:optim}, we introduce the generalized Gauss-Newton method that is commonly employed.

\subsection*{The Gaussian Variational Approximation}%
\label{sub:gva}

In \emph{variational inference}, we minimize the \emph{Kullback-Leibler} (KL) divergence of the true posterior $p(\param \given \data)$ from an approximation distribution $q(\param)$~\cite{blei2017variational, hoffman2013stochastic}.
Let $\gP$ be a family of distributions that we choose as our posterior approximating family.
Then, the variational approximation is given by the following optimization problem:
\begin{equation}
\begin{split}
  q_*(\param)
  &= \argmin_{q \in \gP} \KL \sqr{q(\param) \| p(\param \given \data)}
  = \argmin_{q \in \gP} \int q(\param) \log \rnd{ \frac{q(\param)p(\data)}{p(\data \given \param) p(\param)} } d \param\\
  &= \argmin_{q \in \gP} \int q(\param) \log \rnd{ \frac{q(\param)}{p(\data, \param)}  } d \param + \log p(\data).\label{eq:vi_objective}
\end{split}
\end{equation}
If the approximating family $\gP$ contains the true posterior distribution, the variational approximation is exact and naturally incurs no divergence.
Since the KL divergence is non-negative, the first term in \autoref{eq:vi_objective} stands in special relation to the log marginal likelihood giving rise to the \emph{evidence lower bound} (ELBO):
\begin{equation}
  \label{eq:elbo}
  \log p(\data) \geq \myexpect_q \sqr{\log \frac{p(\data, \param)}{q(\param)} } = \textrm{ELBO}(q).
\end{equation}
Maximizing the ELBO is hence equivalent to minimizing the KL divergence of the true posterior from the approximating distribution.

The Gaussian variational approximation (GVA) is a particular instance of variational inference where $\gP$ is the family of multivariate Gaussian distributions.
In particular, the approximating distribution $q(\param)$ is restricted to the Gaussian distribution $\gauss(\param; \vmu, \mSigma)$ as stated in \autoref{eq:ABI}.
Therefore, the problem of optimizing over a family of distributions turns into optimizing the parameters of a distribution of fixed form.
The ELBO for a probabilistic neural network model with Gaussian prior (\autoref{sec:GNMs}) can therefore be expressed in terms of the parameters $\vmu, \mSigma$.
Further, it is convenient to split the ELBO into an expected log likelihood and a KL divergence term:
\begin{align}
  \label{eq:gva_elbo}
  \textrm{ELBO}(\vmu, \mSigma)
  =\myexpect_{\gauss(\vmu, \mSigma)} \sqr{\log p(\data \given \param)} - \KL \sqr{\gauss(\vmu, \mSigma) \| \gauss(\vmu_0, \mSigma_0)}.
\end{align}
The KL divergence has a closed form solution since both prior and approximating distribution are Gaussian.
This form of the ELBO provides the basis of variational inference in deep learning~\cite{blundell2015weight, khan2018fast, zhang2018noisy}.
The expected log likelihood term of the ELBO seems to be complicated to optimize.
However, a neat relation between derivatives with respect to the parameters $(\vmu, \mSigma)$ and realized samples $\param_s$ of the GVA exists~\cite{opper2009variational} and is due to Bonnet and Price~\cite{bonnet1964transformations, price1958useful}.
We can simplify the derivative with respect to parameters $\vmu$ and $\mSigma$ of the expected log likelihood as follows:
\begin{align}
  \nabla_\vmu \myexpect_{\gauss(\vmu, \mSigma)} \sqr{\log p(\data \given \param)} &= \myexpect_{\gauss(\vmu, \mSigma)} \sqr{\nabla_\param \log p(\data \given \param)} \label{eq:bonnet_mean} \\
  \nabla_\mSigma \myexpect_{\gauss(\vmu, \mSigma)} \sqr{\log p(\data \given \param)}
  &= \frac12 \myexpect_{\gauss(\vmu, \mSigma)} \sqr{\nabla_{\param \param}^2 \log p(\data \given \param)}
  \label{eq:bonnet_var}
\end{align}
Therefore, taking gradients with respect to the variational parameters can be as simple as sampling and taking the gradient with respect to individual samples.
Further, these equations make clear that the GVA is also limited to a second-order approximation.
However, the GVA maintains a global \emph{view} of the loss due to the expectation and is therefore more powerful than the Laplace approximation.

\subsection*{Relation between Laplace and Variational Approximation}%
\label{sub:relation_gva_vi}

Following \citet{opper2009variational}, we compare the optimality criteria of the Laplace and Gaussian variational approximation.
For the Laplace approximation, we have the conditions
\begin{equation}
  \label{eq:laplace_stationary}
  0 = \nabla_\param \log p(\data, \vmu) \quad \textrm{and} \quad \mSigma^{-1} = -\nabla^2_{\param \param} \log p(\data, \vmu).
\end{equation}
For the Gaussian variational approximation, we can devise very similar conditions.
First, we obtain the stationarity conditions by differentiating with respect to the variational parameters in \autoref{eq:gva_elbo}.
The stationarity condition for the mean is simple and for the inverse covariance matrix, we have
\begin{equation}
  \label{eq:vi_cov_stationarity}
  \nabla_\mSigma \textrm{ELBO}(\vmu, \mSigma) = 0
  \rightarrow \mSigma^{-1}= 2\nabla_\mSigma \myexpect_{\gauss(\vmu, \mSigma)} \sqr{-\log p(\data, \param)}.
\end{equation} 
Now, we apply equalities of \autoref{eq:bonnet_mean} and \autoref{eq:bonnet_var} to both stationarity conditions.
Note that these equalities hold not only for the expectation over a likelihood but also the joint distribution~\cite{opper2009variational}.
We obtain the GVA stationarity conditions
\begin{equation}
  \label{eq:vi_stationarity}
  0 = \myexpect_{\gauss(\vmu, \mSigma)} \sqr{\nabla_\param \log(\data, \param)}
  \quad
  \textrm{and}
  \quad
  \mSigma^{-1} = \myexpect_{\gauss(\vmu, \mSigma)} \sqr{-\nabla_{\param \param}^2 \log p(\data, \param)}.
\end{equation}

The relation between \autoref{eq:laplace_stationary} and \autoref{eq:vi_stationarity} highlights the difference between both approximations:
while the Laplace approximation is only defined locally at the MAP, the variational approximation holds globally~\cite{opper2009variational}.
The relation suggests that the difference lies in sampling parameters versus fixed parameters.
Specifically in practice, the variational inference stationarity in \autoref{eq:vi_stationarity} does not hold exactly but for $S$ Monte Carlo samples from the variational approximation, i.e., $\param^{(1)},\dots,\param^{(S)} \sim \gauss (\vmu, \mSigma)$:
\begin{equation}
  \label{eq:vi_stationarity_mc}
  0 = \frac1S \sum_{s=1}^S \nabla_\param \log(\data, \param^{(s)})
  \quad
  \textrm{and}
  \quad
  \mSigma^{-1} = \frac1S \sum_{s=1}^S -\nabla_{\param \param}^2 \log p(\data, \param^{(s)}).
\end{equation}
The main insight of this section is that both approximations limit themselves to the first two moments of the negative log joint distribution.
The variational approximation provides a more global view of the joint distribution by stochasticity in the parameters.

\section{The Generalized Gauss-Newton Method}
\label{sec:optim}

For approximate Bayesian inference in neural networks, second-order derivatives of the log likelihood are typically required.
In particular, this is the case for the Laplace and Gaussian variational approximations.
As mentioned in \autoref{sec:ABI} however, the Hessian is permissively expensive to compute and might even be singular or undefined.
Therefore, the current state-of-the-art Bayesian deep learning algorithms rely on approximate second-order optimization methods~\cite{khan2018fast, osawa2019practical, zhang2018noisy}.
These methods are both scalable and guarantee a positive semi-definite approximation to the Hessian~\cite{bottou2018optimization, martens2014new, schraudolph2002fast}.
In particular, the \emph{generalized Gauss-Newton approximation} (GGN) is used extensively for the Laplace and Gaussian variational approximation~\cite{foresee1997gauss, khan2018fast, martens2015optimizing, ritter2018scalable, zhang2018noisy}.
The GGN is a positive semi-definite approximation to the Hessian.
In practice, the diagonal~\cite{graves2011practical, khan2018fast} or a Kronecker factorization~\cite{ritter2018scalable, zhang2018noisy} of the GGN is often used.
Here, we work with a full GGN approximation to draw conclusions for the other special cases.

The generalized Gauss-Newton method allows us to compute an approximation to the second derivative of the log-likelihood.
The derivative of the log prior in the Laplace approximation and the KL divergence from the prior in the GVA have a closed form and do not require an approximation.
The log likelihood takes the following form in the case of probabilistic neural network models.
According to \autoref{eq:likelihood_factorized}, we have
\begin{equation}
  \label{eq:log_likelihood_summed}
  \log p(\data \given \param) = \sum_{i=1}^N \log p(\vy_i \given \vf(\vx_i;\param)),
\end{equation}
where the likelihood is a generalized linear model likelihood (\autoref{sec:GNMs}).
To take the first and second derivative with respect to the parameter, we apply the chain rule.
\autoref{fig:GNM_illustration} illustrates that we can first differentiate with respect to $\vf$ and then with respect to the parameters.
We define the \emph{Jacobian matrix} $\mJ(\vx;\param) \in \R^{K\times P}$ of $\vf(\vx;\param)$ with respect to the parameters, and the \emph{Hessian tensor} $\tH(\vx;\param) \in \R^{K \times P \times P}$ of second derivatives as
\begin{equation}
  \label{eq:Jacobian}
  \sqr{ \mJ(\vx;\param) }_{ij} = \frac{\partial f_i(\vx;\param)}{\partial \theta_j} \quad \textrm{and} \quad \sqr{ \tH(\vx;\param) }_{ijk} = \frac{\partial^2 f_i(\vx;\param)}{\partial \theta_j \partial \theta_k} ,
\end{equation}
where we assumed the function is twice differentiable for now.
Using the properties of the first and second derivative of the GLM log likelihoods in \autoref{thm:exp_derivatives} and \autoref{tab:glms}, we obtain the gradient
\begin{equation}
  \label{eq:grad_loglik}
  \nabla_\param \log p(\vy \given \vlogits(\vx;\param)) = \mJ(\vx;\param)^\top \nabla_\vf \log p(\vy \given \vf) = \mJ(\vx;\param)^\top \vr(\vy,\vlogits).
\end{equation}
Similarly, the Hessian of the log likelihood can be computed using the chain rule and gives
\begin{equation}
\begin{split}
  \label{eq:hessian_loglik}
  \nabla_{\param \param}^2 \log p(\vy \given \vlogits(\vx;\param))
  &= \tH(\vx;\param)^\top \nabla_\param \log p(\vy \given \vf) + \mJ(\vx;\param)^\top \nabla_{\vf \vf}^2 \log p(\vy \given \vf) \mJ(\vx;\param) \\
  &= \tH(\vx;\param)^\top \vr(\vy,\vlogits) - \mJ(\vx;\param)^\top \mLambda(\vf) \mJ(\vx;\param).
\end{split}
\end{equation}

The first derivative is tractable and efficiently implemented in neural networks using \emph{backpropagation}~\cite{goodfellow2016deep}.
The second derivative with respect to the parameters is problematic because we need to differentiate the neural network with respect to its large amount of parameters twice.
In fact, for some network architectures, for example, \texttt{ReLU} activation functions, the second derivative is not defined everywhere~\cite{zhang2018noisy}.
The generalized Gauss-Newton approximation to the Hessian simply removes the term that is intractable and is given by
\begin{equation}
  \label{eq:GGN}
  \nabla_{\param \param}^2 \log p(\vy \given \vf(\vx;\param)) \approx - \mJ(\vx;\param) \mLambda(\vlogits) \mJ(\vx;\param)^\top.
\end{equation}
This makes the Hessian tractable since we only need first order derivatives with respect to the neural network.
Further, it is always positive semi-definite and does not even require the existence of the neural network Hessian.

Assuming the neural network Hessian $\tH(\vx;\param)$ exists, the GGN approximation is exact in two cases:
either, all residuals are zero, i.e., $\forall (\vx, \vy) \in \data: \vr(\vy,\logits(\vx;\param)) = \vzero$, or the neural network Hessian $\tH(\vx;\param)$ is zero.
Although a neural network can potentially achieve zero residuals, it is both undesirable as it indicates \emph{overfitting} and impractical as this condition does not hold at initialization or during training.\footnote{For the Bernoulli and categorical likelihood, the residual can theoretically only tend towards zero.}
The neural network Hessian $\tH(\vx;\param)$ can only be zero everywhere if the neural network is linear.
Therefore, an alternative derivation of the GGN approximation to the Hessian starts from linearization of the neural network~\cite{bottou2018optimization, martens2014new}.
We define the first order Taylor approximation of the neural network around the expansion point $\param_*$ as
\begin{equation}
  \label{eq:lin_NN}
  \vf_\textrm{lin}^{\param_*}(\vx;\param) = \vf(\vx;\param_*) + \mJ(\vx;\param_*) (\param - \param_*),
\end{equation}
which gives us a linear function in the parameters $\param$ but not in the input $\vx$.
To compute the Hessian at $\param_*$, we therefore linearize the network at this parameter and then compute the Hessian.
This way, we recover the GGN approximation to the Hessian.
This derivation and reasoning for the GGN allows to understand its role in approximate inference better.

\chapter{From Neural Network to Gaussian Process with Laplace and Gauss-Newton}
\label{sec:laplace}
The Laplace approximation is often used as a baseline for Bayesian neural networks and in some cases achieves state of the art results~\cite{ritter2018online,ritter2018scalable,titterington2004bayesian}.
For neural networks, the diagonal or a Kronecker-factored generalized Gauss-Newton approximation to the Hessian is often used~\cite{ritter2018scalable}.
Here, we will work with the full GGN approximation to the Hessian~\cite{foresee1997gauss}.
We call the combination of Laplace and GGN the Laplace-GGN approximation~\cite{khan2019approximate}.
In this chapter, we disentangle the Laplace-GGN approximation.
In particular, we first apply the generalized Gauss-Newton method and then make use of the Laplace approximation.
Going forward, we analyze the individual steps of the Laplace-GGN and their impact on the underlying probabilistic model.
Interestingly, the underlying probabilistic model can be cast as a Gaussian process model and enables function-space inference for neural networks.

\begin{table}[t]
  \centering
  \begin{tabular}{c c c c c}
  \small{Bayesian NN} & & \small{Bayesian GLM} & & \small{Linear Regression} \\
  \vspace{-0.5em}\\
  $p(\param \given \data)$ & $\xrightarrow{\makebox[1.3cm]{\textrm{\small{GGN}}}}$ & $\tilde{p}(\param \given \data)$ & $\xrightarrow{\makebox[1.3cm]{\textrm{\small{Laplace}}}}$ & $\hat{p}(\param\given \data) = q(\param)$\\ \vspace{-0.8em}\\
  & &  $\updownarrow$ & & $\updownarrow$ \\
  \vspace{-1.3em}\\
  & & \small{GP model} & $\xrightarrow{\makebox[1.3cm]{\textrm{\small{Laplace}}}}$ & \small{GP regression}\\
  \end{tabular}
  
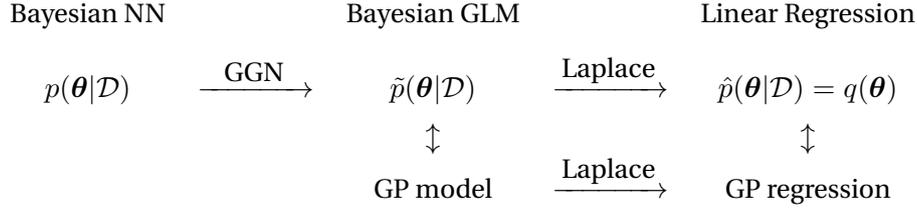
\captionof{figure}{Illustration of approximate inference with the Laplace-GGN method.
  We start with Bayesian neural network inference and, by applying the GGN and Laplace approximation, ultimately infer a Bayesian linear regression, or equivalently, GP regression model.
  As an intermediate step, the Bayesian neural network is turned into a generalized linear model due to the GGN.
  $\tilde{p}(\param \given \data)$ is the true posterior of the GLM and $\hat{p}(\param \given \data)$ is the true posterior of the Bayesian linear regression model which is equal to the Laplace-GGN approximation.}
  \label{fig:laplace_schema}
\end{table}

First, we apply the generalized Gauss-Newton method to the neural network model.
This gives rise to a generalized linear model due to the linearizing property of the GGN.
Equivalently, we can specify this model as a Gaussian process (GP) model.
Subsequently, we apply the Laplace approximation to either of these models.
We identify two simple models, the Bayesian linear and the Gaussian process regression model.
Inferring these models is then \emph{equivalent} to the Laplace-GGN approximation and facilitates a better understanding of the combination of Laplace and GGN approximation.
Figure~\ref{fig:laplace_schema} illustrates the steps and relationships established in this chapter.

After analyzing the Laplace and GGN approximation, we compute the posterior, posterior predictive, and marginal likelihood.
Due to the equivalence to Gaussian process inference, there are two ways for each of these quantities.
This can enable computational advantages in some cases, similar to the \emph{kernel trick}.
The Gaussian process viewpoint further enables model inspection and interpretability as we demonstrate in \autoref{sec:experiments}.

\section{From Neural Network to Generalized Linear Model}%
\label{sec:generalized_lienar_model}

We start with a probabilistic neural network model as specified in \autoref{sec:background}.
During optimization of the MAP objective, we have the iterate $\param_*$.
Typically, $\param_*$ is an estimate of the MAP but for our development this is not necessary.
The generalized Gauss-Newton approximation performs a linearization around the current parameter.
We recall \autoref{eq:lin_NN} from \autoref{sec:background}:
\begin{equation}
  \label{eq:lin_nn_lapsec}
  \linlogits^{\param_*}( \vx; \param) = \vf(\vx;\param_*) + \mJ(\vx;\param_*) (\param - \param_*).
\end{equation}
The linearization above changes our model locally.
We call $\tilde{p}(\param \given \data)$ the posterior of a generalized linear model.
Replacing the neural network $\vf(\vx;\param)$ by its linearized version in the probabilistic model, we then have
\begin{equation}
  \label{eq:glm_nn}
  \tilde{p}(\param \given \data)
  \propto p(\param) \prod_{i=1}^N p(\vy_i \given \linlogits^{\param_*}(\vx_i;\param)),
\end{equation}
which is a generalized linear model (GLM) since $\linlogits^{\param_*}(\vx;\param)$ is linear in the parameter $\param$.
The Jacobian can therefore be understood as a \emph{local} feature map of the inputs.
Equivalently, we can therefore transform this inference problem from the \emph{weight-space} to the \emph{function-space}~\cite{rasmussen2003gaussian}.
We obtain a Gaussian process prior by taking the expectation of the linearized neural network under the parametric prior $p(\param) = \gauss(\vmu_0, \mSigma_0)$.
The mean and covariance function of the Gaussian process prior are given by
\begin{align}
  \vm(\vx) &= \myexpect_{p(\param)} \sqr{ \linlogits^{\param_*}(\vx;\param) } = \linlogits^{\param_*}(\vx;\vmu_0) \label{eq:lap_gp_mean},\\
  \vkappa(\vx, \vx') &= \mycov_{p(\param)} \sqr{ \linlogits^{\param_*}(\vx;\param), \linlogits^{\param_*}(\vx';\param) } = \mJ(\vx;\param_*)^\top \mSigma_0 \mJ(\vx';\param_*),\label{eq:lap_gp_kernel}
\end{align}
which gives rise to the Gaussian process prior in common notation~\cite{rasmussen2003gaussian} as
\begin{equation}
  \label{eq:gp_glm_prior}
  \vf_{\textrm{GP}}(\vx) \sim \GP (\vm(\vx), \vkappa(\vx, \vx')).
\end{equation}
We define the posterior Gaussian process distribution by $\tilde{p}(\vf_\textrm{GP} \given \data)$ in line with the corresponding GLM.
We therefore have the following generalized Gaussian process model (GGPM) as termed by \citet{chan2011generalized}:
\begin{equation}
  \label{eq:glm_GP}
  \tilde{p}(\vf_{\textrm{GP}} \given \data) \propto p(\fgp) \prod_{i=1}^N p(\vy_i \given \fgp).
\end{equation}
The GLM and the GP model have the same posterior predictive since they specify the same prior over functions~\cite{rasmussen2003gaussian}.

So far, we have applied the GGN optimization method but no particular inference algorithm.
Nonetheless, the underlying probabilistic neural network models has turned into a generalized linear model.
The Jacobian, which constitutes the features of the GLM, remains fixed after this transformation.
The second step is now to apply the Laplace approximation to the GLM.
In this context, the Laplace approximation can therefore not really be understood as approximate inference of a neural network model.
In the next sections, we show that the Laplace approximation to the GLM or GGPM is equivalent to solving a Bayesian linear regression or GP regression model, respectively.
The exact posterior of these models then corresponds to the Laplace-GGN approximation.

\section{From Generalized Linear Model to Bayesian Linear Regression}%
\label{sec:gauss_laplace_blr}

The Laplace-GGN approximation is a Laplace approximation to the local generalized linear model in \autoref{eq:glm_nn}.
In this section, we show that the Laplace approximation to this generalized linear model can be understood as exact inference in a Bayesian linear regression (BLR) model.
The form of the Bayesian linear regression model is very intuitive and improves our understanding of the Laplace-GGN approximation.
To keep notation simple, we abbreviate all relevant quantities as follows:
we write $\vlogits(\vx):= \vf(\vx;\param_*)$, $\vg^{-1}(\vx):= \vg^{-1}(\vf(\vx_i;\param_*))$, $\mJ(\vx) := \mJ(\vx;\param_*)$, and $\mLambda(\vx) := \mLambda(\vf(\vx;\param_*))$.
We obtain the following result:

\begin{theorem}
\label{thm:laplace}
The Laplace-GGN approximation to a Bayesian neural network model, i.e., applying the Laplace approximation to the generalized linear model in \autoref{eq:glm_nn}, is equivalent to the exact posterior of a Bayesian linear regression model.
In particular, we obtain for the Laplace approximation that
\begin{equation}
  \label{eq:laplace_linmodel}
  q(\param) = \hat{p}(\param \given \data) \propto p(\param) \prod_{i=1}^N \gauss \rnd{\vy_i \given \vg^{-1}(\vx_i)+\mLambda(\vx_i) \mJ(\vx_i) \rnd{\param - \param_*}, \mLambda(\vx_i)},
\end{equation}
where the original likelihood of the Bayesian NN model or GLM is replaced by a Gaussian likelihood.
The Gaussian likelihood matches its first two moments at $\param_*$ for likelihoods with dispersion parameter $\sigma^2=1$, since $\mLambda(\vx_i) = \myvar \sqr{ \mY_i }$ and the inverse link maps to the mean.
For the overdispersed Gaussian likelihood, this connection is not useful since the GLM is already a Bayesian linear regression model.
\end{theorem}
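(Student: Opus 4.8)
The plan is a direct two-sided computation. On one side I evaluate the Laplace approximation of the generalized linear model \autoref{eq:glm_nn} in closed form; on the other side I compute the exact posterior of the conjugate model \autoref{eq:laplace_linmodel}; then I check that the two Gaussians have the same mean and the same covariance. Throughout I take the expansion point $\param_*$ to be the MAP estimate $\maparam$ of the original network model, as the Laplace recipe (\autoref{sub:laplace}) requires, and I restrict to the unit-dispersion GLM likelihoods (Bernoulli, categorical, Poisson and their vector analogues), for which $\vr(\vy,\vf)=\vy-\vg^{-1}(\vf)$ and $\mLambda(\vf)=\myvar[\mY]$ is symmetric; the overdispersed Gaussian is dealt with separately at the end.

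\emph{Step 1 --- the Laplace side.} Since $\linlogits^{\param_*}(\vx_i;\cdot)$ is affine with constant Jacobian $\mJ(\vx_i)$, the neural-network-Hessian term in \autoref{eq:hessian_loglik} vanishes identically, so by \autoref{thm:exp_derivatives} and \autoref{eq:grad_loglik},
\[
\nabla_\param \log\tilde p(\data,\param) = \nabla_\param\log p(\param) + \sum_{i=1}^N \mJ(\vx_i)^\top \vr(\vy_i, \linlogits^{\param_*}(\vx_i;\param)),
\]
\[
-\nabla^2_{\param\param}\log\tilde p(\data,\param) = \mSigma_0^{-1} + \sum_{i=1}^N \mJ(\vx_i)^\top\mLambda(\linlogits^{\param_*}(\vx_i;\param))\mJ(\vx_i).
\]
Evaluated at $\param_*$, where $\linlogits^{\param_*}(\vx_i;\param_*)=\vf(\vx_i;\param_*)$, the gradient equals $\nabla_\param\log p(\data,\param)|_{\param_*}$ of the \emph{original} network model, which is $\vzero$ because $\param_*=\maparam$; hence $\param_*$ is the GLM's MAP and the Laplace mean, and the Laplace precision is the GGN matrix plus the prior precision, $\mSigma_0^{-1}+\sum_i \mJ(\vx_i)^\top\mLambda(\vx_i)\mJ(\vx_i)$.

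\emph{Step 2 --- the Bayesian-linear-regression side, and conclusion.} In \autoref{eq:laplace_linmodel} the likelihood mean $\vg^{-1}(\vx_i)+\mLambda(\vx_i)\mJ(\vx_i)(\param-\param_*)$ is affine in $\param$ and the noise covariances $\mLambda(\vx_i)$ are held fixed, so the model is conjugate Gaussian and $\hat p(\param\mid\data)$ is exactly Gaussian. Differentiating its log joint, and using symmetry of $\mLambda(\vx_i)$ so that $(\mLambda(\vx_i)\mJ(\vx_i))^\top\mLambda(\vx_i)^{-1}=\mJ(\vx_i)^\top$ (a pseudo-inverse identity that still applies when $\mLambda(\vx_i)$ is singular, e.g.\ for the categorical likelihood, because each residual $\vy_i-\vg^{-1}(\vx_i)$ lies in the range of $\mLambda(\vx_i)$), I get gradient $\nabla_\param\log p(\param)+\sum_i \mJ(\vx_i)^\top(\vy_i-\vg^{-1}(\vx_i)-\mLambda(\vx_i)\mJ(\vx_i)(\param-\param_*))$ and constant Hessian $-\mSigma_0^{-1}-\sum_i \mJ(\vx_i)^\top\mLambda(\vx_i)\mJ(\vx_i)$. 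The posterior precision therefore coincides with the Laplace precision of Step 1, and the stationarity equation at $\param_*$ reduces to $\nabla_\param\log p(\param_*)+\sum_i\mJ(\vx_i)^\top\vr(\vy_i,\vf(\vx_i;\param_*))=\vzero$, the same condition as in Step 1, so the posterior mean is $\param_*$ as well. Thus $q(\param)=\hat p(\param\mid\data)$ as Gaussians. The moment-matching remark then follows by evaluating the surrogate likelihood at $\param_*$: its mean is $\vg^{-1}(\vx_i)=\myexpect[\mY_i]$ and its covariance is $\mLambda(\vx_i)=\myvar[\mY_i]$ for unit dispersion (\autoref{thm:exp_derivatives}, \autoref{tab:glms}); for the overdispersed Gaussian, $\mLambda=\sigma^{-2}\neq\sigma^2=\myvar[\mY]$, so this reading fails, but there is nothing to prove because the linearized model \autoref{eq:glm_nn} is \emph{itself} an exact Bayesian linear regression model and no Laplace step is involved.

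I expect the computations to be routine; the one place needing care is the alignment of the three stationarity conditions at $\param_*$. The point of the algebra is precisely that the residual reference point $\vg^{-1}(\vx_i)=\vf(\vx_i;\param_*)$ together with the $(\mLambda\mJ)^\top\mLambda^{-1}=\mJ^\top$ cancellation collapses the linear-regression MAP equation onto the network's MAP equation, making $\param_*$ simultaneously the mode of the network log joint, of the GLM log joint, and of the Bayesian-linear-regression posterior; if $\param_*$ is not a stationary point of the original objective these three modes separate and the Laplace Gaussian would have to be re-centered at the GLM's actual MAP. A secondary technical nuisance is keeping the singular-$\mLambda$ (categorical) case honest via the range condition on the residuals, which I would isolate in a short preliminary remark.
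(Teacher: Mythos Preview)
Your proposal is correct, but it takes a different route from the paper. You frame the argument as a two-sided moment comparison: compute the mean and precision of the Laplace-GGN Gaussian on one side, compute the mean and precision of the exact BLR posterior on the other, and match them, relying on the MAP condition $\nabla_\param\log p(\data,\param_*)=\vzero$ to pin both means at $\param_*$. The paper instead works term-by-term on the log-likelihood: for each data pair $(\vx,\vy)$ it writes out the second-order Taylor expansion of $\log p(\vy\mid\linlogits^{\param_*}(\vx;\param))$ around $\param_*$ and completes the square (Appendix~\ref{cha:complete_square}), showing directly that the resulting quadratic in $\param$ is, up to a $\param$-independent additive constant, the log of the Gaussian factor $\gauss(\vy\mid\vg^{-1}+\mLambda\mJ(\param-\param_*),\mLambda)$.

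The paper's argument buys two things. First, it is more general: nowhere does it use that $\param_*$ is a stationary point, so the identification $q(\param)=\hat p(\param\mid\data)$ holds for any expansion point (the common mean then drifts away from $\param_*$, but both Gaussians drift together). Your version covers only the MAP case, as you yourself flag in the last paragraph. Second, the completing-the-square route makes the $\param$-independent leftover explicit, namely $\log p(\vy\mid\vf_*)+\tfrac12(\vg^{-1}(\vf_*)-\vy)^\top\mLambda^{-1}(\vg^{-1}(\vf_*)-\vy)$, and the paper reuses exactly this constant as the correction term in the marginal-likelihood result (\autoref{sec:marglik_laplace}); your moment-matching argument would have to redo that algebra separately. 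Conversely, your approach makes the alignment of the three stationarity conditions (network MAP $=$ GLM MAP $=$ BLR mean) transparent in one place, which in the paper only emerges later when the posterior mean is computed in \autoref{sec:posterior_of_gauss_laplace}. Your careful handling of the singular-$\mLambda$ case via the range condition on the residuals is also more explicit than the paper, which defers this to a brief pseudo-inverse remark in Appendix~\ref{cha:complete_square}.
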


\begin{proof}
We apply the Laplace approximation to the model with the linearized neural network in \autoref{eq:glm_nn}.
We handle the log likelihood and log prior individually:
Since the log density of Gaussian is of second order, the prior remains unchanged.
We work with a single summand of the log likelihood and use the representative data pair $(\vx,\vy)$ and drop the dependency on $\vx$ in the notation to save space, i.e., for any operator $A(\vx)$ we write only $A$.
The residual vector $\vr(\vy, \vlogits(\vx;\param_*))$ is further abbreviated as $\vr$.
Therefore, the second-order Taylor approximation to the log likelihood around $\param_*$ required for the Laplace approximation can be written as
\begin{align*}
  \log p(\vy \given \linlogits^{\param_*}(\vx;\param))
  &\approx \log p(\vy \given \linlogits^{\param_*}(\vx;\param_*))
  + \vr \mJ \rnd{ \param - \param_* }
  - \frac12 \rnd{ \param - \param_* }^\top \mJ^\top \mLambda \mJ \rnd{ \param - \param_* }\\
  &=  \log p(\vy \given \vlogits)
  - (\vg^{-1}(\vlogits) - \vy) \rnd{\mJ \rnd{ \param - \param_* } }
  - \frac12 \rnd{\mJ \rnd{ \param - \param_* } }^\top \mLambda \rnd{\mJ \rnd{ \param - \param_* } }\\
  &= \log p(\vy \given \vlogits) + \frac12 \rnd{ \vg^{-1}(\vlogits) - \vy }^\top \mLambda^{-1} \rnd{ \vg^{-1}(\vlogits) - \vy } \\
  & \quad - \frac12 \rnd{ \vg^{-1}(\vlogits) + \mLambda \mJ \rnd{ \param - \param_* }  - \vy }^\top \mLambda^{-1} \rnd{  \vg^{-1}(\vlogits) + \mLambda \mJ \rnd{ \param - \param_* }  - \vy  },
\end{align*}
where we have expanded the residual $\vr$ and completed the square~(\autoref{cha:complete_square}).
Exponentiating the term and removing the parts independent of the parameter $\param$, we obtain the desired result.
For the Laplace approximation to the marginal likelihood, the remaining constants will be important~(cf. \autoref{sec:marglik_laplace}).
\end{proof}

The result makes two interesting things apparent:
first, the term $\mLambda \mJ$ is equal to the Jacobian of $\vg^{-1}(\logits(\vx;\param))$ evaluated at $\param_*$.
This follows from the GLM likelihoods since the derivative of the inverse link is $\mLambda$ and therefore we obtain the Jacobian by the chain-rule.
This indicates that Laplace-GGN implicitly performs a first-order Taylor approximation \emph{after} the inverse link function.
Second, when using the Laplace approximation to a linear model, we essentially turn the model into a solvable Bayesian linear regression model.
This Bayesian linear regression model matches the moments of the original likelihood at the expansion point $\param_*$.
In the case of maximum likelihood estimation, \citet{wedderburn1974quasi} has shown something similar:
using the generalized Gauss-Newton for generalized linear models requires iterative solutions of least-squares problems that are weighted by the response variances.

\section{From Generalized Gaussian Process to Gaussian Process Regression}%
\label{sec:gauss_laplace_gpr}

We have explained how the generalized linear model due to the GGN (\autoref{eq:glm_nn}) can be cast as a generalized Gaussian process model (\autoref{eq:glm_GP}).
Here, we will show that the Laplace approximation to this model yields a Gaussian process regression model.
Essentially, this allows us to do the Laplace-GGN in the function-space instead of the parameter-space.
Therefore, we can trade off computational costs between dimensionality $P$ and dataset size $N$~\cite{rasmussen2003gaussian}.
This holds for the posterior, posterior predictive, and the marginal likelihood.
Therefore, the Laplace-GGN is the first method that uses Gaussian process inference for finite width neural networks~\cite{jacot2018neural, lee2017deep}.
The relationship to exact Gaussian process regression presented in this section allows to use efficient standard routines developed for these models~\cite{rasmussen2003gaussian}.

We denote the Laplace-GGN posterior approximation in function-space by $q(\fgp)$.
In the following we show that it can be simply computed by exact Gaussian process inference.
\citet{rasmussen2003gaussian} derives the Laplace approximation to a Gaussian process model for classification likelihoods.
We denote the neural network linearized after the link function by
\begin{equation}
  \label{eq:lin_link}
  \vg^{-1}_\textrm{lin}(\vx_i;\param) = \vg^{-1}(\vf(\vx;\param_*)) + \mLambda(\vx) \mJ(\vx) (\param - \param_*).
\end{equation}
Using this notation, we can relate the Laplace approximation to the generalized Gaussian process:

\begin{theorem}
\label{thm:gp_laplace}
We define the mean and covariance function of a GP prior $\hat{\vf}_\textrm{GP} \sim \GP (\hat{\vm}(\vx), \hat{\vkappa}(\vx, \vx'))$
\begin{align}
  \hat{\vm}(\vx) &= \vg^{-1}_\textrm{lin} (\vx;\vmu_0) \quad \textrm{and} \quad \hat{\vkappa}(\vx, \vx') = \mLambda(\vx) \mJ(\vx)^\top \mSigma_0 \mJ(\vx') \mLambda(\vx'),
\end{align}
which gives rise to the prior $\hat{p}(\hat{\vf}_\textrm{GP})$.
Then, the Laplace approximation to the generalized Gaussian process model in \autoref{eq:glm_GP} at $\vf(\vx;\param_*)$ is equal to the posterior of the following Gaussian process regression model:
\begin{equation}
  \label{eq:lap_gp_regression}
  q(\fgp) = \hat{p}(\hat{\vf}_\textrm{GP} \given \data) \propto \hat{p}(\hat{\vf}_\textrm{GP}) \prod_{i=1}^N \gauss(\vy_i \given \hat{\vf}_\textrm{GP}, \mLambda(\vx_i)).
\end{equation}
This model complements the Bayesian linear regression model in \autoref{thm:laplace} and has the same marginal likelihood and posterior predictive~\cite{rasmussen2003gaussian}.
\end{theorem}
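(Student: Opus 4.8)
Since the statement is essentially \autoref{thm:laplace} read in function-space, the plan is to reduce to it. By \autoref{thm:laplace}, the Laplace--GGN posterior is the exact posterior of the Bayesian linear regression model with prior $\param\sim\gauss(\vmu_0,\mSigma_0)$ and likelihood $\gauss\rnd{\vy_i \given \vg^{-1}_\textrm{lin}(\vx_i;\param),\mLambda(\vx_i)}$, where $\vg^{-1}_\textrm{lin}(\vx;\param)$ from \autoref{eq:lin_link} is affine in $\param$. The first step is to push the prior forward through the vector-valued affine map $\vx\mapsto\vg^{-1}_\textrm{lin}(\vx;\param)$; since $\param$ is Gaussian, the resulting process is Gaussian, and a one-line moment computation — its mean is $\vg^{-1}_\textrm{lin}(\vx;\vmu_0)=\hat{\vm}(\vx)$ and its covariance is $\mLambda(\vx)\,\vkappa(\vx,\vx')\,\mLambda(\vx')=\hat{\vkappa}(\vx,\vx')$ by \autoref{eq:lap_gp_kernel} and symmetry of $\mLambda$ — identifies it as $\GP(\hat{\vm},\hat{\vkappa})$. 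The second step is to invoke the standard equivalence between the weight-space and function-space views of Bayesian linear regression~\cite{rasmussen2003gaussian}, which yields that this model and the Gaussian process regression model in \autoref{eq:lap_gp_regression} share the same posterior over $\hat{\vf}_\textrm{GP}$, posterior predictive, and marginal likelihood. Finally, since the GLM in \autoref{eq:glm_nn} and the GGPM in \autoref{eq:glm_GP} specify the same prior over functions, their Laplace approximations coincide, which is what connects this chain back to ``the Laplace approximation to the generalized Gaussian process model'' in the statement.

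Alternatively, one can argue self-containedly by mirroring the proof of \autoref{thm:laplace} directly in function-space: restrict \autoref{eq:glm_GP} to the training inputs, leave the quadratic Gaussian prior on $\rnd{\fgp(\vx_1),\dots,\fgp(\vx_N)}$ untouched, and Taylor-expand each $\log p(\vy_i \given \fgp(\vx_i))$ to second order about $\vf(\vx_i;\param_*)$. By \autoref{thm:exp_derivatives} the first derivative is the residual $\vy_i-\vg^{-1}(\vx_i)$ and the second is $-\mLambda(\vx_i)$, so completing the square (\autoref{cha:complete_square}) turns each summand, up to $\fgp$-independent constants, into a Gaussian factor in $\fgp(\vx_i)$. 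Applying the per-input affine reparametrization $\hat{\vf}_\textrm{GP}(\vx):=\vg^{-1}(\vx)+\mLambda(\vx)\rnd{\fgp(\vx)-\vf(\vx;\param_*)}$, which equals $\vg^{-1}_\textrm{lin}(\vx;\param)$ when $\fgp=\linlogits^{\param_*}$, sends the prior $\GP(\vm,\vkappa)$ to $\GP(\hat{\vm},\hat{\vkappa})$ and each Gaussian factor to $\gauss(\vy_i\given\hat{\vf}_\textrm{GP}(\vx_i),\mLambda(\vx_i))$, recovering \autoref{eq:lap_gp_regression}.

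The main obstacle is a matter of care rather than difficulty. One has to pin down what ``the Laplace approximation of a GP model'' means — the finite-dimensional Laplace on the function values at the data points, reassembled with the unchanged GP prior off the data into a GP regression model, which needs an appeal to Kolmogorov consistency or simply to the cited GP results — and one has to check that the affine map defining $\hat{\vf}_\textrm{GP}$ is a bijection, so that its constant Jacobian drops out of the normalization and the posterior, predictive, and marginal likelihood transform without distortion. A minor caveat, inherited verbatim from \autoref{thm:laplace}, is that $\mLambda(\vx)$ is only positive semi-definite for the categorical likelihood, so the $\mLambda^{-1}$ appearing when completing the square should be read as a pseudo-inverse (or with jitter), which does not affect the stated identity.
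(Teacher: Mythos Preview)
Your second, ``self-contained'' approach is essentially the paper's proof: Taylor-expand each $\log p(\vy_i\given\fgp)$ about $\vf_*=\vf(\vx_i;\param_*)$, use \autoref{thm:exp_derivatives} to identify the first and second derivatives, and complete the square to obtain a Gaussian factor $\gauss(\vy_i\given \vg^{-1}(\vf_*)+\mLambda(\fgp-\vf_*),\mLambda)$. The paper stops there and leaves the identification of $\hat{\vf}_\textrm{GP}$ and its prior $\GP(\hat{\vm},\hat{\vkappa})$ implicit; you make that affine reparametrization explicit, which is arguably cleaner and fills a gap the paper glosses over with ``the prior is Gaussian and therefore remains unchanged.''

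Your first approach --- reducing to \autoref{thm:laplace} by pushing the parametric prior forward through $\param\mapsto\vg^{-1}_\textrm{lin}(\vx;\param)$ and then invoking the weight-space/function-space duality of \cite{rasmussen2003gaussian} --- is a genuinely different and more economical route. It avoids redoing the Taylor expansion and completion of the square in function space, at the cost of relying on an external equivalence result rather than a direct computation. The paper chooses to rederive everything in function space, presumably to keep the two theorems parallel and to set up the marginal-likelihood correction term in \autoref{sec:marglik_laplace}, where the constants dropped during completion of the square are reused. Your careful remarks about what ``Laplace approximation of a GP model'' means and about the pseudo-inverse for the categorical case are both apposite and go beyond what the paper spells out.
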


\begin{proof}
In correspondence with \autoref{sec:gauss_laplace_blr}, we conduct a Laplace approximation at $\vf_*:=\vf(\vx;\param_*)$.
Again, the prior is Gaussian and therefore remains unchanged.
For the likelihood of an arbitrary label $\vy$, we have
\begin{align*}
  \log p(\vy \given \fgp) &\approx  \log p(\vy \given \vf_*)  - (\vg^{-1} \rnd{ \vf_* } - \vy )^\top (\fgp- \vf_*) - \frac12 (\fgp- \vf_*)^\top \mLambda (\fgp- \vf_*) \\
  &= \log p(\vy \given \vf_*) + \frac12 (\vg^{-1}(\vf_*) - \vy)^\top \mLambda^{-1} (\vg^{-1}(\vf_*) - \vy)
  \\
  & \qquad \qquad \qquad - \frac12 (\vg^{-1}(\vf_*) + \mLambda \fgp - \mLambda \vf_*- \vy) \mLambda^{-1} (\vg^{-1}(\vf_*) + \mLambda \fgp- \mLambda \vf_* - \vy),
\end{align*}
where we expanded around $\vf_*$ and completed the square.
The first two summands of the last term are independent of $\fgp$ and the last term yields a Gaussian likelihood as in the proof of \autoref{thm:laplace}.
This allows to write a Gaussian likelihood.
Again, the remaining constants are necessary for the computation of the marginal likelihood.
\end{proof}

The kernel in this model resembles the \emph{neural tangent kernel} which arises when analyzing neural network training in function space~\cite{jacot2018neural}.
To see this, we can reparameterize the GP as $\mLambda(\vx) \hat{\vf}_\textrm{GP}$ and obtain the kernel $\hat{\vkappa}(\vx,\vx') = \mJ(\vx)^\top \mSigma_0 \mJ(\vx')$.
With a spherical prior covariance $\mSigma_0$, we recover the kernel of \citet{jacot2018neural}.
In contrast, we deal with finite width networks in the Bayesian setting and can recover similar properties using the practical Laplace-GGN method presented here.
On the practical side, Laplace-GGN in function-space enables inference corresponding to a full posterior covariance for neural networks with huge amounts of parameters but few data.
In particular, Laplace-GGN in function-space has the computational complexity $\mathcal{O}(N^3 K^3 + NPK)$ for inversion of the kernel and computation of the Jacobians.
Potentially, one can approximate this kernel by a low-rank structure, e.g., the Nyström approximation~\cite{rasmussen2003gaussian}, leading to novel approximate inference algorithms for neural networks.
In contrast, the complexity of the corresponding Bayesian linear regression inference is $\mathcal{O}(P^3+NPK)$.
In the following, we will derive the posterior, predictive, and marginal likelihood of both models.

\section{Computing the Laplace-GGN Posterior Approximation}%
\label{sec:posterior_of_gauss_laplace}

We compute the Laplace-GGN approximation to the posterior distribution of the neural network model.
To obtain the parameters of the Gaussian approximation $q(\param) = \gauss(\vmu, \mSigma)$, we make use of \autoref{thm:laplace}.
Bayesian linear regression models have a closed form solution~\cite{bishop2006pattern, murphy2012machine}.
The mean and covariance parameter of the Gaussian posterior $\hat{p} (\param \given \data)$ are given by
\begin{align}
  \mSigma &= \Big( \mSigma_0^{-1} + \sum_{i=1}^N \mJ(\vx_i) \mLambda(\vx_i) \mJ(\vx_i)^\top \Big)^{-1}, \\
  \vmu &= \mSigma \Big( \mSigma_0^{-1} \vmu_0 + \sum_{i=1}^N \mJ(\vx_i)^\top (\vy_i - \vg^{-1}\rnd{\vlogits(\vx_i)}) + \mJ(\vx_i)^\top \mLambda(\vx_i) \mJ(\vx_i) \param_* \Big).
\end{align}
The covariance is simply the inverse of the Hessian of the negative log joint distribution.
At first, the mean seems more involved.
However, assuming a local minimum, i.e. $\sum_{i=1}^N \mJ_*(\vx_i) (\vy_i - \vg^{-1}_*(\vx_i)) - \mSigma_0^{-1}(\param_* - \vmu_0) = \vzero$, we obtain have $\vmu = \param_*$.
In this case, we recover a Laplace approximation that is constructed at the MAP, which we did not need to assume.

Alternatively, we can use the posterior of the generalized Gaussian process and infer in function space.
Recall the kernel from the beginning of this chapter in \autoref{eq:lap_gp_kernel}.
Due to the multiple outputs, our kernel maps to a matrix.
The kernel $\mK \in \R^{NK \times NK}$ consists of $N$ $(K\times K)$ \emph{submatrices} along both dimensions.
The \emph{submatrix} at the $i$-th position along the first and $j$-th position along the second axis is given by $\vkappa(\vx_i, \vx_j) \in \R^{K \times K}$.
We further have the kernel $\mK_{**} = \kappa(\vx_*, \vx_*)$ for data point $\vx_*$ and the joint kernel with the training data $\mK_{*n} \in \R^{K \times NK}$.
The block-diagonal matrix $\mW \in \R^{NK\times NK}$ is defined by $N$ $(K \times K)$ blocks where the $i$-th block is the negative log likelihood Hessian $\mLambda(\vx_i)$.
Assuming a stationary MAP estimate, we have the following distribution on $\fgp$ due to the Laplace approximation:
\begin{equation}
  \label{eq:laplace_gp_posterior}
  \fgp \given \data, x_* \sim \gauss \rnd{ \vf(\vx_*; \param_*), \mK_{**} -  \mK_{*n} \rnd{ \mK + \mW^{-1} }^{-1} \mK_{*n}^\top }.
\end{equation}

\section{Posterior Predictive Distributions}%
\label{sec:predictive_laplace}

This section manifests how the generalized Gauss-Newton method and approximate inference change our underlying inference model.
Due to the two steps depicted in Figure~\ref{eq:predictive_lap_blr}, we obtain three different posterior predictive models.
The original model is a probabilistic neural network model.
Due to the GGN, we obtain a generalized linear or Gaussian process model.
Finally, the Laplace approximation corresponds to exact inference in a Bayesian linear or Gaussian process regression model.
Subsequently, we specify all three (approximate) posterior predictive distributions.
We predict the label $\vy_*$ of a new data point $\vx_*$ and therefore need to compute the predictive distribution $p(\vy_* \given \data, \vx_*)$.
All models have the same (approximate) posterior distribution but the posterior predictive is naturally different.

\textbf{NN sampling:} to make predictions with the neural network, we need to approximate the posterior predictive integral by sampling.
With $S$ Monte Carlo samples $\param_1, \ldots, \param_S \sim q(\param)$, we have
\begin{equation}
  \label{eq:predictive_lap_nn}
  p(\vy_* \given \data, \vx_*) \approx \int p(\vy_* \given \logits(\vx_*;\param)) q(\param) d\param \approx \frac{1}{S} \sum_{i=1}^S p(\vy_* \given \logits(\vx_*;\param_s)).
\end{equation}

\textbf{GLM sampling:} to predict using the generalized linear or Gaussian process model, we also need samples to approximate the posterior predictive integral unless we have a Gaussian likelihood.
Again using $S$ samples, we have
\begin{equation}
  \label{eq:predictive_lap_glm}
  \tilde{p}(\vy_* \given \data, \vx_*) \approx \int p(\vy_* \given \linlogits(\vx_*;\param)) q(\param) d\param \approx \frac{1}S \sum_{i=1}^S p(\vy_* \given \linlogits^{\param_*}(\vx_*;\param_s)).
\end{equation}

\textbf{BLR predictive:} the Laplace-GGN leads to exact inference in a Bayesian linear regression model.
The posterior predictive is available in a closed form but the distribution does not match the original likelihood.
We have
\begin{equation}
  \label{eq:predictive_lap_blr}
\begin{split}
  \hat{p}(\vy_* \given \data, \vx_*) &= \int \gauss(\vy_* \given \linlink(\vx_*;\param), \mLambda(\vx_*) ) q(\param) d\param \\
  &= \gauss \rnd{ \vg^{-1}_\textrm{lin}(\vx_*;\vmu), \mLambda(\vx_*) \mJ(\vx_*) \mSigma \mJ(\vx_*)^\top \mLambda(\vx_*) + \mLambda(\vx_*) }.
\end{split}
\end{equation}
At the MAP, the mean simply is the MAP prediction of the neural network with additional uncertainty.
In the case of a Gaussian likelihood, the last step is not needed.
For the other likelihoods, the Hessian $\mLambda(\vx_*)$ corresponds to the variance of the GLM response variable.
For the BLR and GLM sampling predictive, we can equivalently use the Gaussian process view under use of the posterior Gaussian process introduced in the previous section.

Only the first posterior predictive approximation is used in practice~\cite{blundell2015weight, khan2018fast, ritter2018scalable, zhang2018noisy}.
However for a Bayesian neural network with scalar Gaussian likelihood, \citet{foong2019between} have recently shown empirically that the third version often works better. The development of this chapter theoretically underline this result and extends it to other likelihoods.
Applying the generalized Gauss-Newton approximation changes our underlying inference model to a generalized linear model.
Therefore, this is the model we should predict with when using the Laplace-GGN.
The closed form Bayesian linear regression version is the second choice since it maintains linearity.
However, the likelihood of the original model is replace by a Gaussian.
In \autoref{sec:experiments}, we support this hypothesis experimentally and show that the first predictive can fail spectacularly.

\section{Marginal Likelihood Approximation}%
\label{sec:marglik_laplace}

In this section, we derive the Laplace-GGN marginal likelihood approximation.
We can derive it from the Laplace approximation to the marginal likelihood of the generalized linear or Gaussian process model.
Using the exact marginal likelihoods as given by the BLR and GP regression models, we only have to derive a \emph{correction} term.
In the proofs of \autoref{thm:laplace} and~\ref{thm:gp_laplace}, the second-order approximation to the likelihood is the same and we can use it to derive the following result:

\begin{theorem}
  Let $\hat{p}(\data)$ be the marginal likelihood of the Bayesian linear or Gaussian process regression model (\autoref{thm:laplace} and \ref{thm:gp_laplace}).
  Then, the Laplace approximation to the marginal likelihood of the generalized linear model denoted by $\tilde{p}(\data)$ is given by
  \begin{equation}
    \label{eq:marglik_laplace}
    \log \tilde{p}(\data) \approx \log \hat{p}(\data) +  \sum_{i=1}^N \Big[ \log p(\vy_i \given \vf(\vx_i;\param_*)) - \log \gauss (\vy_i \given \vg^{-1}(\vf(\vx_i;\param_*)), \mLambda(\vx_i))\Big].
  \end{equation}
  For the Gaussian likelihood, this leads to $\hat{p}(\data)$ while other likelihoods maintain a correction term.
\end{theorem}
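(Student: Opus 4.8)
The plan is to exploit the fact, already observed in the proofs of \autoref{thm:laplace} and \autoref{thm:gp_laplace}, that the second-order Taylor approximation of each log-likelihood summand around $\param_*$ (respectively around $\vf_* = \vf(\vx_i;\param_*)$) decomposes \emph{exactly} into a constant piece plus a Gaussian log-density in the linearized model. Concretely, for each data point the Laplace expansion of $\log p(\vy_i \given \linlogits^{\param_*}(\vx_i;\param))$ equals
\[
  \log p(\vy_i \given \vf(\vx_i;\param_*)) + \tfrac12 \vr_i^\top \mLambda(\vx_i)^{-1} \vr_i + \log \gauss\!\rnd{\vy_i \given \vg^{-1}_\textrm{lin}(\vx_i;\param),\,\mLambda(\vx_i)} - (\text{normalizer of that Gaussian}),
\]
where $\vr_i = \vg^{-1}(\vf(\vx_i;\param_*)) - \vy_i$ and the Gaussian normalizer combines with the $\tfrac12\vr_i^\top\mLambda^{-1}\vr_i$ term to give precisely $\log\gauss(\vy_i \given \vg^{-1}(\vf(\vx_i;\param_*)), \mLambda(\vx_i))$ evaluated with the residual. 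So the first step is to write down this identity cleanly, i.e.\ to establish that exponentiating the Laplace-expanded GLM joint distribution gives
\[
  \exp\!\Big(\textstyle\sum_i \big[\log p(\vy_i \given \vf(\vx_i;\param_*)) - \log\gauss(\vy_i \given \vg^{-1}(\vf(\vx_i;\param_*)),\mLambda(\vx_i))\big]\Big)\; \times\; p(\param)\prod_i \gauss\rnd{\vy_i \given \vg^{-1}_\textrm{lin}(\vx_i;\param),\mLambda(\vx_i)},
\]
where the leading exponential factor is a constant $C$ independent of $\param$, and the remaining product is exactly the unnormalized joint density of the Bayesian linear regression model of \autoref{thm:laplace}.

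The second step is to integrate both sides over $\param$. The left-hand side integral is, by definition, the Laplace approximation to the marginal likelihood of the GLM, namely $\tilde p(\data)$. On the right-hand side, $C$ pulls out of the integral, and what remains is $\int p(\param)\prod_i \gauss(\vy_i \given \vg^{-1}_\textrm{lin}(\vx_i;\param),\mLambda(\vx_i))\,d\param$, which is precisely the exact marginal likelihood $\hat p(\data)$ of the BLR model (it is Gaussian–Gaussian conjugate, so this integral is closed-form and equals $\hat p(\data)$ by definition of that model in \autoref{thm:laplace}). Taking logarithms then yields $\log \tilde p(\data) \approx \log C + \log \hat p(\data)$, which is exactly \autoref{eq:marglik_laplace} once $\log C$ is written out as the stated sum. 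The ``$\approx$'' is inherited solely from the Laplace (second-order Taylor) step on the GLM likelihood; everything after that is an exact algebraic identity. The Gaussian-process version is identical with $\param_*$ replaced by $\vf_*$ and $\hat p(\data)$ read off the GP regression model of \autoref{thm:gp_laplace}; since that model shares its marginal likelihood with the BLR model, the same correction term appears. Finally, for the Gaussian likelihood one checks from \autoref{tab:glms} that $\vg^{-1}(\vf) = \vf$ and $\mLambda = \sigma^{-2}$ (or $\mSigma^{-1}$), so $p(\vy_i \given \vf(\vx_i;\param_*))$ and $\gauss(\vy_i \given \vg^{-1}(\vf(\vx_i;\param_*)),\mLambda(\vx_i))$ coincide, every bracketed term vanishes, and $\tilde p(\data) = \hat p(\data)$.

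The only real work is the first step: carefully extracting the $\param$-independent constant from the per-datapoint Laplace expansion and recognizing the leftover quadratic-in-$\param$ exponential as the BLR likelihood $\gauss(\vy_i \given \vg^{-1}_\textrm{lin}(\vx_i;\param),\mLambda(\vx_i))$ with the right mean and covariance. This is essentially the ``complete the square'' computation already carried out in the proofs of \autoref{thm:laplace} and \autoref{thm:gp_laplace} (see \autoref{cha:complete_square}), so I would simply invoke those proofs: they show the log-likelihood splits as (constant at $\param_*$) $+ \tfrac12\vr_i^\top\mLambda(\vx_i)^{-1}\vr_i - \tfrac12(\cdots)^\top\mLambda(\vx_i)^{-1}(\cdots)$, and I would just identify the first two of those three terms with $\log p(\vy_i\given\vf(\vx_i;\param_*)) - \log\gauss(\vy_i\given\vg^{-1}(\vf(\vx_i;\param_*)),\mLambda(\vx_i))$ by adding and subtracting the Gaussian log-normalizer $-\tfrac12\log\det(2\pi\mLambda(\vx_i))$. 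I expect the main obstacle to be purely bookkeeping: making sure the dispersion-parameter conventions and the matching of the normalizing constants of the two Gaussians (the BLR likelihood and the ``moment-matched'' Gaussian in the correction term) are handled consistently across the overdispersed and unit-dispersion cases, so that the $\tfrac12\vr^\top\mLambda^{-1}\vr$ pieces cancel correctly and no stray $\det$ factors are left behind.
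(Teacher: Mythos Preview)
Your proposal is correct and follows essentially the same route as the paper: both invoke the completed-square expansion from the proofs of \autoref{thm:laplace} and \autoref{thm:gp_laplace}, add and subtract the Gaussian log-normalizer $\tfrac12\log\det(2\pi\mLambda(\vx_i))$ to split each per-datapoint term into a $\param$-independent correction plus a properly normalized Gaussian likelihood, and then recognize the remaining product with the prior as the BLR/GP regression joint whose integral is $\hat p(\data)$. Your write-up is in fact somewhat more explicit than the paper about the integration step and the role of the ``$\approx$'', but the argument is the same.
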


\begin{proof}
  The Laplace approximation to the generalized linear and GP model marginal likelihood arises from a second-order approximation.
  We can simply start off from the proofs of \autoref{thm:laplace} and~\ref{thm:gp_laplace}.
  For the prior, a second order approximation of the log density is exact since it is a Gaussian.
  For the likelihood, we obtain the following term for a single input-output pair:
  \begin{align*}
    \log p(\vy \given \vf) &\approx \log p(\vy \given \vlogits_*) + \frac12 \rnd{ \vg^{-1}(\vlogits_*) - \vy }^\top \mLambda^{-1} \rnd{ \vg^{-1}(\vlogits_*) - \vy } \\
    & \quad - \frac12 \rnd{ \vg^{-1}(\vlogits_*) + \mLambda \mJ \rnd{ \param - \param_* }  - \vy }^\top \mLambda^{-1} \rnd{  \vg^{-1}(\vlogits_*) + \mLambda \mJ \rnd{ \param - \param_* }  - \vy  }.
  \end{align*}
  Let us add and subtract the term $\frac12 \log \rnd{ (2\pi)^k \det{\mLambda} }$ to obtain a Gaussian log likelihood in the second term.
  The second term becomes $-\log \gauss (\vy \given \vg^{-1}(\vf(\vx;\param_*)), \mLambda(\vx))$.
  The last term is also properly normalized to a Gaussian.
  The first and second term are independent of the parameter $\param$ and give us the correction terms.
  The remaining term applied to the entire dataset and combined with the prior gives us the BLR or GP regression models defined earlier.
  We denote the marginal likelihood of these models as $\hat{p}(\data)$.
\end{proof}

Having established this connection, we are only left with the computation of $\hat{p}(\data)$.
For the term $\hat{p}(\data)$, we have again two ways to compute it due to weight and function space equivalence.
Estimating $\hat{p}(\data)$ using the Bayesian linear regression model of \autoref{thm:laplace} is widely known and there exist numerically robust implementations for it~\cite{bishop2006pattern, murphy2012machine}.
We have
\begin{equation}
  \label{eq:marglik_laplace_blr}
  \log \hat{p}(\data) =  \sum_{i=1}^N \log \gauss (\vy_i \given \vg_{\vmu}^{-1}(\vx_i)) - \frac12 \log \frac{\det \mSigma_0}{\det \mSigma} - \frac12 \rnd{ \vmu - \vmu_0 }^\top \mSigma_0^{-1} \rnd{ \vmu - \vmu_0 }.
\end{equation}
Alternatively, we can use the Gaussian process variant.
The key difference lies in the computational complexity shifted into the number of samples $N$ as opposed to the number of parameters $P$.
In contrast to the kernel for the generalized GP in \autoref{sec:predictive_laplace}, we have the kernel of the GP regression model:
$\hat{\mK} \in \R^{NK\times NK}$ consists of $N^2$ $(K \times K)$ submatrices.
The submatrix at $i$-th position along the first dimension and $j$-th position along the second axis is given by $\hat{\vkappa}(\vx_i, \vx_j)$ defined in \autoref{thm:gp_laplace}.
$\mW \in \R^{NK \times NK}$ is a block-diagonal matrix of the $N$ Hessians $\mLambda(\vx_i)$ for $i \in [N]$.
Lastly, the mean function $\vm \in R^{NK}$ is a concatenation of the individual mean functions of the $N$ data points and $\vy$ a concatenation of the corresponding labels.
Then, we have for the marginal likelihood of the GP regression model
\begin{equation}
  \label{eq:marglik_laplace_gp}
  \log \hat{p} (\data) = - \frac12 \rnd{\vm - \vy}^\top \rnd{ \mK + \mLambda }^{-1}\rnd{\vm - \vy} - \frac12 \det \sqr{ \mK + \mLambda } - \frac{NK}2 \log 2\pi.
\end{equation}
With a single output model, we recover the form derived by \citet[Sec. 2]{rasmussen2003gaussian}.

\section{The Prior as Regularizer and Distribution}%
\label{sec:laplace_prior}

In recent discussions, the prior in Bayesian neural network models and its obscure meaning have been criticized~\cite{gelada2020bnncrit, wilson2019bayesian}.
Here, we discuss the role of the prior in a Bayesian deep learning algorithm.
In particular, we discuss the role of the prior in an approximate as opposed to exact posterior.

This chapter shows that, in the case of the Laplace-GGN approximation, the prior acts in form of a distribution in a GLM and as a regularizer in the MAP objective.
The regularizer impacts the learned feature map, i.e., Jacobian, as well as the linearization point that both give rise to the GLM that we infer.
The GLM with fixed feature map and a Gaussian prior is simple to infer and has a unimodal posterior.
While it might be complicated to pose a prior on neural network parameters, the Laplace-GGN approximation simplifies the model to a GLM where a simple Gaussian prior is very common and justified~\cite{murphy2012machine}.
Therefore, it is not straightforward to criticize the priors used in Bayesian deep learning since the prior impacts an approximate and not exact posterior.
For the Laplace-GGN, we rather have to ask if the MAP estimation and linearization at the MAP is reasonable.
The error of the linearization can be quantified while MAP estimation forms the foundation of deep learning itself.
If we find both to be reasonable, inference in a GLM using the Laplace approximation is a minor remaining problem.



\chapter{The Impact of the Generalized Gauss-Newton in Variational Inference}
\label{sec:vi}
The Gaussian variational approximation is the direct competitor of the Laplace approximation:
it is equally scalable and uses the same approximating family.
The difference is that optimization and approximate inference steps are not sequential but combined into one \emph{variational inference} algorithm that maximizes the ELBO (\autoref{eq:elbo}).
Nonetheless, we can disentangle the respective influence of the generalized Gauss-Newton method and approximate inference for each step of such an algorithm.
This understanding again suggests different posterior predictive distributions, updates in function-space due to a Gaussian process formulation, and leads to the identification of a new algorithm.
To maximize the ELBO, we make use of \emph{natural gradient variational inference} (NGVI).
NGVI uses the information geometry to improve convergence~\cite{amari1998natural} and is responsible for recent successes in the field of Bayesian deep learning~\cite{khan2018fast, osawa2019practical, zhang2018noisy}.
Most algorithms further rely on the generalized Gauss-Newton approximation.
\citet{zhang2018noisy} use a Kronecker-factored approximation and \citet{khan2018fast} use a diagonal approximation to the GGN.
Here, we analyze the case of the full GGN approximation.

First, we specify the parameter updates of a natural gradient variational inference method for the GVA~\cite{khan2017conjugate,khan2018fast}.
A short derivation of NGVI for the GVA can be found in \autoref{ch:ngvi}.
We introduce three algorithms derived from the NGVI update to a Gaussian posterior approximation.
Two of these algorithms, \emph{variational online generalized Gauss-Newton} (VOGGN) and \emph{online generalized Gauss-Newton} (OGGN), have been introduced before~\cite{khan2019approximate}.
Additionally, we derive a new algorithm, the \emph{linearized Gaussian variational approximation} (LGVA) algorithm.
It is derived as a compromise between VOGGN and OGGN.
OGGN is similar to an iterative Laplace approximation as it does not sample.
The difference between VOGGN and LGVA lies in the order of operations:
VOGGN samples first and then uses the GGN while LGVA applies the GGN and then samples.
An illustration of the order of operations of the three algorithms is given in \autoref{tab:ngvi_schema}.
Finally, we show how the Gaussian variational approximation at every step of these NGVI algorithms can be cast as a Bayesian linear or Gaussian process regression problem.

\begin{table}[t]
  \centering
  \begin{tabular}{c l l}
  \small{Algorithm} & \small{Step $1$} & \small{Step $2$}  \\
  \toprule
  \small{VOGGN} & \small{sample} $\param_1,.., \param_S \sim \gauss(\vmu_t, \mSigma_t)$ & \small{$S$ lin. networks} $\vf_\textrm{lin}^{\param_1}(\vx;\param),\ldots,\vf_\textrm{lin}^{\param_S}(\vx;\param)$ \\
  \small{LGVA} & \small{$1$ lin. network} $\vf_\textrm{lin}^{\vmu_t}(\vx;\param)$ & \small{sample} $\param_1,.., \param_S \sim \gauss(\vmu_t, \mSigma_t)$ \\
  \small{OGGN} & \small{$1$ lin. network} $\vf_\textrm{lin}^{\vmu_t}(\vx;\param)$ & \small{use the mean} $\vmu_t$  \\
  \end{tabular}
  \caption{Illustration of three variational generalized Gauss-Newton algorithms. The order of approximating the expectation by samples and applying the linearization of the GGN yields different algorithms.
  Sampling is part of the variational inference algorithm while linearization comes from the GGN.
  Only VOGGN samples many neural networks.
  OGGN is a crude version of LGVA and VOGGN since it uses only the mean instead of sampling.}
  \label{tab:ngvi_schema}
\end{table}

\section{Gaussian Natural Gradient Variational Inference}%
\label{sec:ngvi}

We denote by $q_t(\param) = \gauss (\vmu_t, \mSigma_t)$ the Gaussian variational approximation to the posterior at iteration $t$.
In this chapter, we will work with the natural parameters.
For the Gaussian distribution, the natural parameters at iteration $t$ are given by
\begin{equation}
  \label{eq:gaussian_natural}
  \natparam_t = \mSigma^{-1}_t \vmu_t \quad \textrm{and} \quad \natparas_t = - \frac12 \mSigma^{-1}_t.
\end{equation}
It is mathematically convenient to work with this parameterization for NGVI.
NGVI updates the first and second natural parameter of the Gaussian as
\begin{align}
  \mSigma_{t+1}^{-1} \vmu_{t+1} &= (1 - \gamma)\mSigma_t^{-1} \vmu_t + \gamma \mSigma_0^{-1}\vmu_0 + \gamma \sqr{  \nabla_\vmu \myexpect \sqr{ \log p(\data \given \param) } - 2 \nabla_\mSigma \myexpect \sqr{ \log p(\data \given \param) } \vmu_t  } \label{eq:first_nat_update},\\
  - \frac12 \mSigma_{t+1}^{-1} &= (1 - \gamma) \sqr{ - \frac12 \mSigma_{t}^{-1} } + \gamma \sqr{ - \frac12 \mSigma_{0}^{-1}  } + \gamma \nabla_\mSigma \myexpect \sqr{ \log p(\data \given \param) }, \label{eq:secnd_nat_update}
\end{align}
where the expectation is taken over the posterior approximation $q_t$ at iteration $t$.
The update tells us that we combine the current posterior approximation $q_t$ with the prior using a convex combination (usually $\gamma \leq 1$).
The data dependency is only due to the gradients with respect to mean and covariance of the expected log likelihood terms.
Due to the linearity of expectation, the expected log likelihood can be written as
\begin{equation}
\label{eq:lin_exp}
  \myexpect \sqr{ \log p(\data \given \param) } = \myexpect \sqr{ \sum_{i=1}^N \log p(\vy_i \given \vf(\vx_i;\param)) } = \sum_{i=1}^N \myexpect \sqr{ \log p(\vy_i \given \vf(\vx_i;\param))  },
\end{equation}
which allows us to restrict ourselves to a single representative data pair $(\vx, \vy)$.\footnote{In a practical scenario, one can use doubly-stochastic variational inference by sampling subsets of data to obtain an unbiased estimate\cite{hoffman2013stochastic}.}
Therefore, the problem reduces to estimation of the derivatives $\nabla_\vmu \myexpect \sqr{ \log p(\vy \given \vf(\vx;\param)) }$ and $\nabla_\mSigma \myexpect \sqr{ \log p(\vy \given \vf(\vx;\param)) }$.
To enable the computation of these gradients, we use the identities made popular by \citet{opper2009variational} that we introduced in \autoref{sec:ABI} (see \autoref{eq:bonnet_mean} and~\ref{eq:bonnet_var}).
This allows to express the gradients of the expectation as the expectation of gradients for individual samples from $q_t$.
To derive different Gaussian NGVI algorithms, different approximations to these derivatives have been proposed based on the GGN~\cite{khan2018fast, khan2019approximate, zhang2018noisy}.
After introducing the VOGGN algorithm, we will add one more variant to this family of Gaussian NGVI algorithms.
Recall that $\vf_\textrm{lin}^{\param_*}(\vx;\param)$ denotes the neural network function linearized at $\param_*$ due to the GGN.
Further, we denote $S$ Monte Carlo samples from the approximating distribution $q_t(\param)=\gauss(\vmu_t, \mSigma_t)$ by $\param_1,\ldots,\param_S$.

In all the following derivations, we first apply the identity of \citet{opper2009variational}.
This allows us to estimate the derivative with respect to the mean and covariance by sampling individual gradients.
Recall the equalities from \autoref{sec:ABI}
\begin{align}
  \nabla_\vmu \myexpect_{\param_s \sim \gauss(\vmu_t, \mSigma_t)} \sqr{ \log p(\vy \given \vf(\vx;\param_s)) }
  &= \myexpect_{\param_s \sim \gauss(\vmu_t, \mSigma_t)} \sqr{ \nabla_\param \log p(\vy \given \vf(\vx;\param_s))} \\
  \nabla_\mSigma \myexpect_{\param_s \sim \gauss(\vmu_t, \mSigma_t)} \sqr{\log p(\vy \given \vf(\vx;\param_s))}
  &= \frac12 \myexpect_{\param_s \sim \gauss(\vmu_t, \mSigma_t)} \sqr{ \nabla_{\param \param}^2 \log p(\vy \given \vf(\vx;\param_s))}.
\end{align}
All following algorithms vary only in their approximation to these expectations and the log likelihood or its gradient and Hessian.
We now show three different variants.

\section{Variational Online Generalized Gauss-Newton}%
\label{sec:vogn}

The variational online generalized Gauss-Newton algorithm uses the GGN \emph{after} sampling parameters from the approximating distribution.
For the first and second derivative, we have
\begin{align}
\begin{split}
   \myexpect_{\param_s \sim \gauss(\vmu_t, \mSigma_t)} \sqr{ \nabla_\param \log p(\vy \given \vf(\vx;\param_s))}
  &\approx \frac1S \sum_{i=1}^S \mJ(\vx;\param_s)^\top \vr(\vy, \vf(\vx;\param_s)),
\end{split}\\
\begin{split}
\label{eq:vogn_sigma_update}
  \myexpect_{\param_s \sim \gauss(\vmu_t, \mSigma_t)} \sqr{ \nabla_{\param \param}^2 \log p(\vy \given \vf(\vx;\param_s))}
  &\approx \frac1{2} \sum_{i=1}^S \nabla_{\param \param}^2 p (\vy \given \vf(\vx;\param_s)) \\
  &\approx \frac1{2} \sum_{i=1}^S \nabla_{\param \param}^2 p(\vy \given \vf_\textrm{lin}^{\param_s}(\vx;\param_s))\\
  &= -\frac1{2} \sum_{i=1}^S \mJ(\vx;\param_s)^\top \mLambda(\vf(\vx;\param_s)) \mJ(\vx;\param_s).
\end{split}
\end{align}
For the first derivative, simply approximate the expected gradient using $S$ samples.
For the Hessian, we first sample $S$ neural network models and then linearize these models individually at the sampled parameters $\param_s$.
We consider this expansion point as a constant and therefore can compute the Hessian of the linearized neural network log likelihood with respect to individual samples $\param_s$.
This is like simultaneously sampling the linearization point and parameter.
Similar algorithms proposed before, have used either a diagonal, low-rank, or Kronecker factored approximation to the Hessian~\cite{blundell2015weight, khan2018fast, mishkin2018slang, zhang2018noisy}.
Next, we introduce a new algorithm that does not apply the GGN per sample but rather before sampling.

\section{Linearized Gaussian Variational Inference}%
\label{sec:lin_gva}

The linearized GVA applies the generalized Gauss-Newton approximation \emph{before} sampling.
That means, we linearize the neural network at some point $\param_*$ and then compute the gradients.
Here, we choose to linearize at $\vmu_t$.
Therefore, we have
\begin{align}
\begin{split}
\label{eq:lgva_update_mu}
   \myexpect_{\param_s \sim \gauss(\vmu_t, \mSigma_t)} \sqr{ \nabla_\param \log p(\vy \given \vf(\vx;\param_s))}
  &\approx \myexpect_{\param_s \sim \gauss(\vmu_t, \mSigma_t)} \sqr{ \nabla_\param \log p(\vy \given \vf_\textrm{lin}^{\vmu_t}(\vx;\param_s))} \\
  &= \frac1S \sum_{i=1}^S \mJ(\vx;\vmu_t)^\top \vr(\vy,\vf_\textrm{lin}^{\vmu_t}(\vx;\param_s)),
\end{split}\\
\begin{split}
\label{eq:lgva_update_sigma}
  \myexpect_{\param_s \sim \gauss(\vmu_t, \mSigma_t)} \sqr{ \nabla_{\param \param}^2 \log p(\vy \given \vf(\vx;\param_s))}
  &\approx \myexpect_{\param_s \sim \gauss(\vmu_t, \mSigma_t)} \sqr{ \nabla_{\param \param}^2 \log p(\vy \given \vf_\textrm{lin}^{\vmu_t}(\vx;\param_s))} \\
  &= \myexpect_{\param_s \sim \gauss(\vmu_t, \mSigma_t)} \big[ -\mJ(\vx;\vmu_t)^\top \mLambda(\vf_\textrm{lin}^{\vmu_t}(\vx;\param_s)) \mJ(\vx;\vmu_t) \big] \\
  &= -\frac{1}{S} \sum_{i=1}^S \mJ(\vx;\vmu_t)^\top \mLambda(\vf_\textrm{lin}^{\vmu_t}(\vx;\param_s)) \mJ(\vx;\vmu_t).
\end{split}
\end{align}
In contrast to VOGGN, we only sample in the first order of the neural network.
LGVA has two potential advantages over VOGGN:
we need to compute only one Jacobian no matter how many samples we take and linearization might stabilize the training.
LGVA can be seen the variational twin of the Laplace-GGN since it constructs a GLM in each step and not only at the MAP.
In this GLM, we take a step of natural gradient variational inference.
Therefore, we should predict with this model using the GLM sampling method.
The reason is easy to see:
in the above derivation, we only work with the linearized neural network.

\section{Deterministic Variational Online Gauss-Newton}%
\label{sec:det_vogn}

The last algorithm we introduce is called online generalized Gauss-Newton (OGGN).
It is motivated as a deep learning optimizer derived from natural gradient variational inference~\cite{khan2019approximate}.
Instead of sampling to compute expectations, we simply take the current mean $\vmu_t$.
Therefore, this method is related to the Laplace approximation.
We have the derivatives
\begin{align}
\begin{split}
   \myexpect_{\param_s \sim \gauss(\vmu_t, \mSigma_t)} \sqr{ \nabla_\param \log p(\vy \given \vf(\vx;\param_s))}
  &\approx \nabla_\vmu \log p(\vy \given \vf(\vx;\vmu_t))\\
  &= \mJ(\vx;\vmu_t)^\top \vr(\vy,\vf(\vx;\vmu_t),
\end{split}\\
\begin{split}
  \myexpect_{\param_s \sim \gauss(\vmu_t, \mSigma_t)} \sqr{ \nabla_{\param \param}^2 \log p(\vy \given \vf(\vx;\param_s))}
  &\approx \nabla_{\vmu \vmu}^2 \log p(\vy \given \vf(\vx;\vmu_t)) \\
  &\approx -\mJ(\vx;\vmu_t)^\top \mLambda(\vf(\vx;\vmu_t))\mJ(\vx;\vmu_t),
\end{split}
\end{align}
where we approximate the expectation using the mean.
For the second derivative, we use the GGN to get the last line.
Note also that we can obtain above algorithm starting from LGVA and using the mean $\vmu_t$ instead of sampling parameters $\param_s$.

\section{Variational GGN Iterations as Exact Inference}%
\label{sec:vogn_blr}

In line with the proofs for the Laplace-GGN approximation, we will show that all above NGVI algorithms solve local Bayesian linear regression models.
Again, we can interpret these models in the function-space and characterize them as Gaussian processes.
This analysis also explains why VOGGN is the most powerful algorithm and can be expected to predict well when we sample from the neural network.
LGVA and OGGN are therefore expected to require linearization to predict accurately.

The natural parameter updates of the Gaussian variational approximation in \autoref{eq:first_nat_update} and~\ref{eq:secnd_nat_update} can simply be written into the Gaussian posterior approximation $q_{t+1}$ by multiplying with the Gaussian sufficient statistics (see \autoref{ch:ngvi}).
The first step is to combine the prior and the posterior approximation at step $t$ to obtain an intermediary prior.
We take the terms independent of the data from the natural parameter updates and define the natural parameters of the Gaussian $p_t(\param)=\gauss(\vm, \mS)$ as $\eta^{(1)} = (1-\gamma) \mSigma_t^{-1} \vmu_t + \gamma \mSigma_0^{-1} \vmu_0$ and $\eta^{(2)} = - \frac12 \sqr{ (1-\gamma) \mSigma_t^{-1} + \gamma \mSigma_0^{-1} }$.
Resolving the data-dependent term requires more steps and is shown in the proof of the following theorem.

\begin{table}[t]
  \centering
  \begin{tabular}{c | l l l}
  Algorithm & samples & $\hat{\vf}_s(\vx)$  & $\hat{\mJ}_s(\vx)$ \\
  \toprule
  VOGGN & $S$ & $\vf(\vx;\param_s)$  & $\mJ(\vx;\param_s)$ \\
  LGVA & $S$ & $\vf^{\vmu_t}_\textrm{lin}(\vx;\param_s)$ & $\mJ(\vx;\vmu_t)$ \\
  OGGN & $1$ & $\vf^{\vmu_t}_\textrm{lin}(\vx;\vmu_t)=\vf(\vx;\vmu_t)$ & $\mJ(\vx;\vmu_t)$ \\
  \end{tabular}
  \caption{Values of the parameters in \autoref{thm:ngvi} for the three algorithms. $\param_s$ is a sample from the posterior approximation $q_t$ at iteration $t$ and $\vmu_t$ its mean. Only VOGGN samples $S$ neural networks and obtains individual Jacobians.}
  \label{tab:ngvi_theorem}
\end{table}

\begin{theorem}
\label{thm:ngvi}
  The VOGGN, LGVA, and OGGN algorithms perform exact Bayesian linear regression in each update.
  In the most general case, we can characterize the updated posterior approximation $q_{t+1}(\param)$ as
  \begin{align}
    q_{t+1}(\param) \propto p_t(\param) \prod_{i=1}^N \prod_{i=1}^S \gauss \rnd{ \vy_i \Big| \vg^{-1}(\hat{\vf}_s(\vx_i)) + \mLambda(\hat{\vf}_s(\vx_i))\hat{\mJ}_s(\vx_i)(\param - \vmu_t), \frac{S}\gamma \mLambda(\hat{\vf}_s(\vx_i)) },
  \end{align}
  where $S$ is the number of Monte Carlo samples, $\gamma$ the step size and the function and Jacobian values $\hat{\vf}_s$ and $\hat{\mJ}_s$ depend on the algorithm.
  For the particular values, see \autoref{tab:ngvi_theorem}.
  The key difference to \autoref{thm:laplace} lies in the fact that we sample $\param_s$ from $q_t(\param)$ as opposed to taking the mean.
  For the overdispersed Gaussian likelihood, this also holds but needs to be written slightly differently since $\mLambda(\hat{\vf}_s(\vx))$ does not correspond to the noise variance (see end of proof below).
\end{theorem}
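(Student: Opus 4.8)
The plan is to imitate the proof of \autoref{thm:laplace}, but now working with the natural-parameter update of NGVI rather than a Laplace expansion. First I would split the updates \autoref{eq:first_nat_update}--\autoref{eq:secnd_nat_update} into a data-independent part and a data-dependent part. The data-independent part is exactly the convex combination of prior and current posterior, collected into the intermediary Gaussian $p_t(\param)=\gauss(\vm,\mS)$ whose natural parameters are $(1-\gamma)\mSigma_t^{-1}\vmu_t+\gamma\mSigma_0^{-1}\vmu_0$ and $-\tfrac12[(1-\gamma)\mSigma_t^{-1}+\gamma\mSigma_0^{-1}]$; this Gaussian plays the role of the ``prior'' of the claimed Bayesian linear regression model. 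It then remains to show that the data-dependent part of the update equals the natural-parameter contribution of the stated product of Gaussian likelihoods.

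Second, I would compute once and for all the natural-parameter contribution of a single generic Gaussian factor $\gauss(\vy \given \va+\mB(\param-\vmu_t),\mC)$: completing the square in $\param$ (\autoref{cha:complete_square}) shows that it adds $\mB^\top\mC^{-1}(\vy-\va+\mB\vmu_t)$ to the first natural parameter and $-\tfrac12\mB^\top\mC^{-1}\mB$ to the second. Substituting $\va=\vg^{-1}(\hat{\vf}_s(\vx_i))$, $\mB=\mLambda(\hat{\vf}_s(\vx_i))\hat{\mJ}_s(\vx_i)$ and $\mC=\tfrac{S}{\gamma}\mLambda(\hat{\vf}_s(\vx_i))$, then summing over data points $i$ and Monte Carlo samples $s$, the inverse noise covariance cancels one factor $\mLambda$ and the $S/\gamma$ produces precisely the $\gamma/S$ prefactors expected from a $\gamma$-step NGVI update estimated with $S$ samples.

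Third, I would expand the data-dependent terms $\gamma[\nabla_\vmu\myexpect[\log p(\data\given\param)]-2\nabla_\mSigma\myexpect[\log p(\data\given\param)]\vmu_t]$ and $\gamma\nabla_\mSigma\myexpect[\log p(\data\given\param)]$ using the identities \autoref{eq:bonnet_mean}--\autoref{eq:bonnet_var} together with the GLM derivative identities of \autoref{thm:exp_derivatives} (so $\nabla_\vf\log p = \vr = \vy-\vg^{-1}$ and $\nabla^2_{\vf\vf}\log p = -\mLambda$, with the GGN dropping the network-Hessian term), and then replace each expectation by the algorithm-specific estimator of \autoref{tab:ngvi_theorem}: sampling $\param_s\sim q_t$ with $\hat{\vf}_s=\vf(\cdot;\param_s)$, $\hat{\mJ}_s=\mJ(\cdot;\param_s)$ for VOGGN; linearizing at $\vmu_t$ first, so that $\hat{\mJ}_s=\mJ(\cdot;\vmu_t)$ is shared and only $\hat{\vf}_s=\vf^{\vmu_t}_\textrm{lin}(\cdot;\param_s)$ varies, for LGVA; and taking $S=1$, $\param_s=\vmu_t$ for OGGN. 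A term-by-term comparison with the generic Gaussian-factor contribution from the previous step matches both natural parameters and establishes the claimed equivalence, so that $q_{t+1}$ is the exact (Gaussian) posterior of $p_t$ times the product of Gaussian likelihoods.

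Finally I would treat the overdispersed Gaussian likelihood separately: there $\mLambda=\mSigma^{-1}$ is the precision rather than a response variance, so the Gaussian factor must be written as $\gauss(\vy_i \given \hat{\vf}_s(\vx_i)+\hat{\mJ}_s(\vx_i)(\param-\vmu_t),\tfrac{S}{\gamma}\mSigma)$ and the same completing-the-square bookkeeping goes through with $\mLambda\mapsto\mSigma^{-1}$. The main obstacle is purely this bookkeeping: keeping $\gamma$, the $1/S$ averaging, the factor $\tfrac12$ relating $\nabla_\mSigma\myexpect[\cdot]$ to the expected Hessian, and the chain-rule factor $\mLambda$ (the Jacobian of $\vg^{-1}$) mutually consistent so that the effective noise covariance collapses to $\tfrac{S}{\gamma}\mLambda$; once the generic contribution is in hand, the rest is a matching exercise.
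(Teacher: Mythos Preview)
Your proposal is correct and follows essentially the same route as the paper: split the NGVI update into the data-independent convex combination (yielding the intermediary prior $p_t$) and the data-dependent term, then use completing the square together with the GLM gradient/Hessian identities to identify each per-datum, per-sample contribution as a Gaussian likelihood with noise $\tfrac{S}{\gamma}\mLambda$. The only cosmetic difference is that you compute the natural-parameter contribution of a \emph{generic} Gaussian factor $\gauss(\vy\mid\va+\mB(\param-\vmu_t),\mC)$ once and then match, whereas the paper plugs the specific gradient and GGN-Hessian into $e^{\gamma\langle T(\param),\widetilde{\veta}\rangle}$ and completes the square directly; both amount to the same bookkeeping, and your handling of the overdispersed case agrees with the paper's remark.
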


\begin{proof}
The prior $p_t(\param)$ arises from the natural parameter update.
Additionally, we plug the data-dependent terms into the Gaussian natural parameterization (cf. \autoref{sec:GNMs} and \autoref{ch:ngvi}).
In particular, we need to plug into $e^{\gamma \langle T(\param), \widetilde{\veta}\rangle}$ where $\widetilde{\veta}$ denotes the data-dependent natural parameter summands of \autoref{eq:first_nat_update} and~\ref{eq:secnd_nat_update} as follows
\begin{equation*}
  e^{\gamma \langle T(\param), \widetilde{\veta}\rangle} = \exp \crl{ \gamma \langle \param, \nabla_\vmu \myexpect \sqr{ \log p(\data \given \param) } - 2 \nabla_\mSigma \myexpect \sqr{ \log p(\data \given \param) } \vmu_t  \rangle + \gamma \langle \param \param^\top, \nabla_\mSigma \myexpect \sqr{ \log p(\data \given \param) } \rangle  }.
\end{equation*}
Next, we can use the linearity of expectation and write $\myexpect \sqr{ \log p(\data \given \param) }$ as a sum over the $N$ data points.
Since the inner product is linear, we can pull the sum outside of the exponent and get a product over $N$ data points instead:
\begin{equation*}
  \prod_{i=1}^N \exp \crl{ \gamma \langle \param, \nabla_\vmu \myexpect \sqr{ \log p(\vy_i \given \param) } - 2\nabla_\mSigma \myexpect \sqr{ \log p(\vy_i \given \param) \vmu_t} \rangle + \gamma \langle \param \param^\top, \nabla_\mSigma \myexpect \sqr{ \log p(\vy_i \given \param) }  \rangle },
\end{equation*}
where we have abbreviated $\log p(\vy \given \vf(\vx;\param))$ as $\log p(\vy \given \param)$.
Taking $S$ samples $\param_1,\ldots,\param_S \sim q_t(\param)$ leads to a sum over $S$ divided by $S$.
The sum can again be pulled outside to obtain a product over these samples and we pull the gradient inside the expectation to obtain
\begin{equation*}
  \prod_{i=1}^N \prod_{s=1}^S \exp \crl{ \frac{\gamma}{S} \langle \param, \nabla_\param \log p(\vy_i \given \param_s)  - \nabla_{\param \param}^2 \log p(\vy_i \given \param_s) \vmu_t \rangle + \frac{\gamma}{2S} \langle \param \param^\top, \nabla_{\param \param}^2  \log p(\vy_i \given \param_s) \rangle }.
\end{equation*}
We continue with the exponent for a single data and MC sample.
We use $\hat{f}_s(\vx)$ for the function and $\hat{\mJ}_s(\vx)$ for the Jacobian for some data point $(\vx, \vy)$ and parameter sample $\param_s$.
For brevity, we write $\hat{\mLambda}_s:=\mLambda(\hat{\vf}_s(\vx))$.
Then, we have for a single exponent
\begin{align*}
  &\frac{\gamma}{S} \langle \param, \mJ_s(\vx)^\top \vr(\vy, \hat{\vf}_s(\vx)) + \mJ_s(\vx)^\top \hat{\mLambda}_s\mJ_s(\vx) \vmu_t \rangle - \frac{\gamma}{2S} \langle \param \param^\top, \mJ_s(\vx)^\top \hat{\mLambda}_s \mJ_s(\vx) \rangle   \\
  = &  \frac{\gamma}S \param^\top \mJ_s(\vx)^\top \rnd{\vy - \vg^{-1}(\vf_s(\vx)) + \hat{\mLambda}_s \mJ_s(\vx) \vmu_t } - \frac{\gamma}{2S} \param^\top \mJ_s(\vx)^\top \hat{\mLambda}_s \mJ_s(\vx) \param  \\
  = &- \frac12 \rnd{\vg^{-1}(\hat{\vf}_s(\vx)) + \hat{\mLambda}_s \mJ_s(\vx) \rnd{ \param - \vmu_t } -  \vy } \rnd{\frac{S}\gamma \hat{\mLambda}_s}^{-1} \rnd{\vg^{-1}(\hat{\vf}_s(\vx)) + \hat{\mLambda}_s\mJ_s(\vx) \rnd{ \param - \vmu_t } -   \vy  } \\
  &+ \frac12 \rnd{ \vy - \vg^{-1}(\hat{\vf}_s(\vx)) } ^\top \rnd{ \frac{S}{\gamma}\hat{\mLambda}_s}^{-1} \rnd{  \vy - \vg^{-1}(\hat{\vf}_s(\vx))  }
\end{align*}
where we first used the inner product properties and then completed the square.
The first term in the individual exponent yields a Gaussian density with the desired structure and therefore concludes the proof.
Note that for OGGN, we do not sample so the proof is simpler but follows the same steps.
For the overdispersed Gaussian likelihood, we can set $\hat{\mLambda}_s=\mI_K$ above and then divide all terms by the dispersion parameter, i.e., variance.
That is only possible because both residual and Hessian are scaled by $\sigma^{-2}$ (see. \autoref{tab:glms}).
Then, we maintain a Gaussian likelihood.
\end{proof}

In comparison to \autoref{thm:laplace} obtained for the Laplace-GGN approximation, this theorem characterizes the steps of an approximate inference algorithm as opposed to the stationary point.
Therefore, we additionally have the step size $\gamma$ in our model.
The deterministic OGGN algorithm ($S=1$) is similar to the Laplace-GGN approximation.
This is apparent if we set the step size $\gamma$ of OGGN to $1$ at a stationary point:
The linear regression model in \autoref{thm:laplace} and \autoref{thm:ngvi} become equivalent.
OGGN has the advantage that it is an online algorithm and provides a posterior approximation in every step and not only at a MAP estimate.
All derived quantities in \autoref{sec:laplace} can also be applied to the OGGN posterior approximation: in particular, we should predict using GLM sampling in \autoref{eq:predictive_lap_glm}.

The Bayesian linear regression model corresponding to VOGGN and LGVA varies significantly due to the $S$ samples.
Each sample \emph{augments} the model with $N$ new predictive models that always predict the same $N$ targets $\vy$ from observations $\vx$.
In \autoref{sec:ABI}, we have characterized the stationarity of the Gaussian variational approximation, which depends on an expectation.
Here, we have taken samples to approximate this expectation and observe that this indeed leads to a more \emph{global} characterization due to an augmented linear regression model.
Therefore, comparing OGGN with VOGGN and LGVA is similar to the relation between Laplace and Gaussian variational approximation presented in \autoref{sec:ABI} but for iterations instead of stationarity.
Following the developments of \autoref{sec:laplace}, we can equivalently turn above Bayesian linear regression model into a Gaussian process regression model with a kernel of size $NKS\times NKS$.
Potentially, these models are useful to identify a good step size $\gamma$ and sample-size $S$ since these parameters are represented in the model.

\section{Comparison and Posterior Predictive Computation}%
\label{sec:post_pred_ngvi}

In \autoref{sec:laplace}, we have introduced three ways to compute the posterior predictive of the Bayesian neural network model.
Based on the derivation of the algorithms and their relation to a Bayesian linear Regression model, we can argue for the right choice of posterior predictive.
LGVA linearizes the neural network \emph{before} taking samples and can therefore be also characterized as a GLM in each inference step.
This can be seen in the updates of LGVA (\autoref{eq:lgva_update_mu} and~\ref{eq:lgva_update_sigma}) where we use a linearized neural network for the log likelihood and therefore have a GLM.
Since OGGN can be derived from LGVA by a crude approximation of the expectation, the same argument holds for OGGN.
Further, OGGN is tightly connected to the Laplace approximation for which we proposed to use the GLM sampling predictive.
Therefore, both LGVA and OGGN should make use of the GLM sampling predictive.

VOGGN works very differently from both LGVA and OGGN since we first sample and then linearize.
That means, during update steps we sample non-linear neural networks and then linearize them individually at the sampled parameter.
This allows the computation of the Hessian approximation for different neural networks in \autoref{eq:vogn_sigma_update}.
Therefore, we obtain $S$ different Jacobians instead of a single one as in LGVA.
In the predictive setting, this would allow to use the NN sampling method.
Notably, VOGGN is the only approximate inference algorithm where we can arguably expect the NN sampling method to work well.
Nonetheless, the NN sampling method is the only one used for prediction with Bayesian neural networks in the past.

\chapter{Experiments}
\label{sec:experiments}

In this chapter, we investigate the behavior of the analyzed and proposed algorithms and validate our hypotheses.
One of the key propositions of this work is that the computation of the predictive distribution should be aligned with the inference algorithm.
Therefore, we investigate experimentally how the combination of approximate inference and the generalized Gauss-Newton method impacts the posterior predictive.
Further, we use the identified generalized linear and Gaussian process models to approximate the marginal likelihood of a neural network, and use the Gaussian process posterior predictive for explainability.
We conduct our experiments on toy regression and classification datasets that allow detailed visualizations.
For the explainability experiment, we use a binary handwritten digit classification task.

In \autoref{fig:datasets}, we illustrate both datasets $\data$ each with $N=150$ data points.
The two-dimensional classification problem is known as ``two moons''~\cite{pedregosa2011scikit}.
Here, we have inputs $\vx_i \in \R^2$ and targets $y_i \in \{0, 1\}$.
The one-dimensional regression task is known as ``Snelson'' named after its inventor~\cite{snelson2007flexible}.
In this case, we have inputs and outputs $x_i, y_i \in \R$.
We further add an artificial gap in this data set to make it more complicated and observe overfitting in line with~\cite{foong2019between, khan2019approximate}.
Both datasets are standard toy problems to benchmark non-linear models like Gaussian processes and neural networks.
For both the regression and classification dataset, we proceed as follows:
first, we select our models using the marginal likelihood.
That means, we find appropriate parameters for the prior and, in the regression example, for the likelihood.
Next, we compare the three posterior predictive distributions for the different inference algorithms (cf. \autoref{sec:posterior_of_gauss_laplace}).
Finally, we make use of the Gaussian process model to understand the predictions of our models.

\begin{figure}[t]
    \centering
    \vspace{-0.7em}
    \begin{subfigure}[b]{0.4\textwidth}
        \centering
        \includegraphics[width=\textwidth]{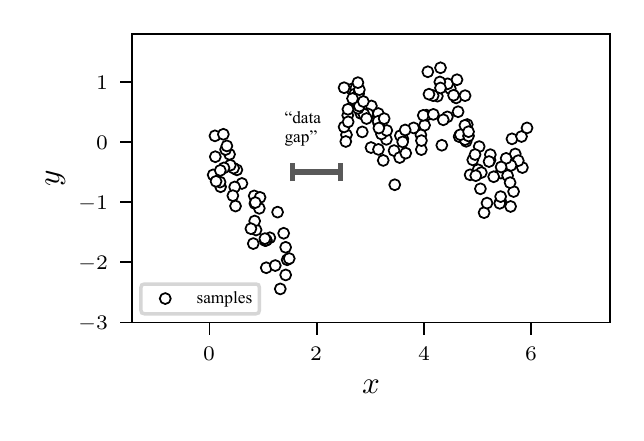}
        \caption{Regression dataset.}
        \label{fig:reg_data}
    \end{subfigure}
    \begin{subfigure}[b]{0.4\textwidth}
        \centering
        \includegraphics[width=\textwidth]{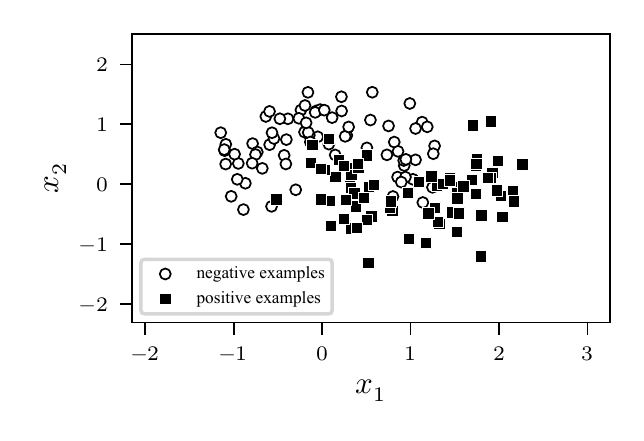}
        \caption{Classification dataset.}
        \label{fig:cls_data}
    \end{subfigure}
    \caption{Visualization of  the two toy example datasets  used in the experimental study.
    Figure~(a) shows the univariate regression dataset Snelson~\cite{snelson2007flexible} with an additional ``data gap''.
    Figure~(b) shows  the two moons classification  dataset that has a noisy decision  boundary and can therefore lead to severe overfitting.}
    \label{fig:datasets}
\end{figure}

For both tasks, we use a standard \emph{multilayer perceptron} with $5$ layers and $25$ hidden units per layer and the \texttt{tanh} activation function.
All layers have bias parameters.
We have parameter vectors $\param \in \R^P$ with $P={2676}$ for the $1$-D regression task and $P={2701}$ for the $2$-D classification task.
The parameter vectors define our neural network function $f(\vx;\param)$ mapping inputs to outputs as depicted in \autoref{fig:GNM_illustration}.
For the binary classification task, we use a Bernoulli likelihood, i.e., we model $Y \sim \textrm{Bernoulli}(f(\vx;\param))$ where the neural network $f$ parameterizes the natural parameter (cf. \autoref{tab:glms}).
In the regression case, we use a Gaussian likelihood with dispersion parameter $\sigma^2$, i.e., we model the response $Y \sim \gauss (f(\vx;\param), \sigma^2)$.
In line with the literature, we use a spherical Gaussian prior with precision $\delta$ on the parameters, i.e., $\param \sim \gauss (\vzero, \delta^{-1} \mI_P)$.
Using the marginal likelihood, we can then optimize the hyperparameter $\delta$ for both problems.
In the regression case, we additionally have the observation noise $\sigma^2$ as hyperparameter.

\section{Model Selection Using Marginal Likelihood}%
\label{sec:modsel_marglik}

We use the Laplace-GGN algorithm introduced in Chapter~\ref{sec:laplace} to compute an approximation to the marginal likelihood.
The marginal likelihood gives evidence to prefer one model over another.
Therefore, it allows us to find suitable hyperparameters $\delta$ and $\sigma^2$.
This procedure is called \emph{empirical Bayes}.
Empirical Bayes is uncommon for neural networks and usually cross-validation schemes are applied~\cite{goodfellow2016deep}, even in Bayesian deep learning~\cite{khan2018fast, zhang2018noisy}.
While empirical Bayes is uncommon for neural networks, it has been explored before in the context of Laplace and Gaussian variational approximations~\cite{foresee1997gauss, wu2019deterministic}.
In particular, the method of \citet{foresee1997gauss} is the same as the one presented here in the case of a Gaussian likelihood.

For both datasets, we have $N=150$ training samples and a test set $\data_\textrm{test}$ with $1000$ additional input output pairs.
This allows us to estimate the generalization error.
For the generalization error, we use the average log likelihood on the test data at the MAP.
Let $p$ be the likelihood of the corresponding model.
Then, we have for the average log likelihood at the MAP
\begin{equation}
  \label{eq:log_loss}
  \ell \ell = \frac1{|\data_\textrm{test}|} \sum_{(\vx_i, y_i) \in \data_\textrm{test}} \log p(y_i \given f(\vx_i;\param_\textrm{MAP})),
\end{equation}
where $\vx_i$ is a scalar in the regression dataset.
Ideally, the marginal likelihood approximation suggests the same optimal parameters as the test log likelihood.
In the regression problem, we choose the hyperparameter ranges $\sigma^2 \in [0.001, 10]$ and $\delta \in [0.0001, 100]$.
For classification, we choose $\delta \in [0.01, 100]$.
Using this range of hyperparameters, we train the one neural networks for each parameter setting until convergence to a MAP estimate.
For training the MAP objective, we use the Adam optimizer~\cite{kingma2014adam}.
At the MAP, we compute the Laplace-GGN approximation to the marginal likelihood, i.e. , to the local generalized linear model.
For the predictive distribution, we therefore also choose the GLM sampling variant.

\begin{figure}[t]
  \vspace{-3.2em}
    \centering
    \includegraphics[width=\textwidth]{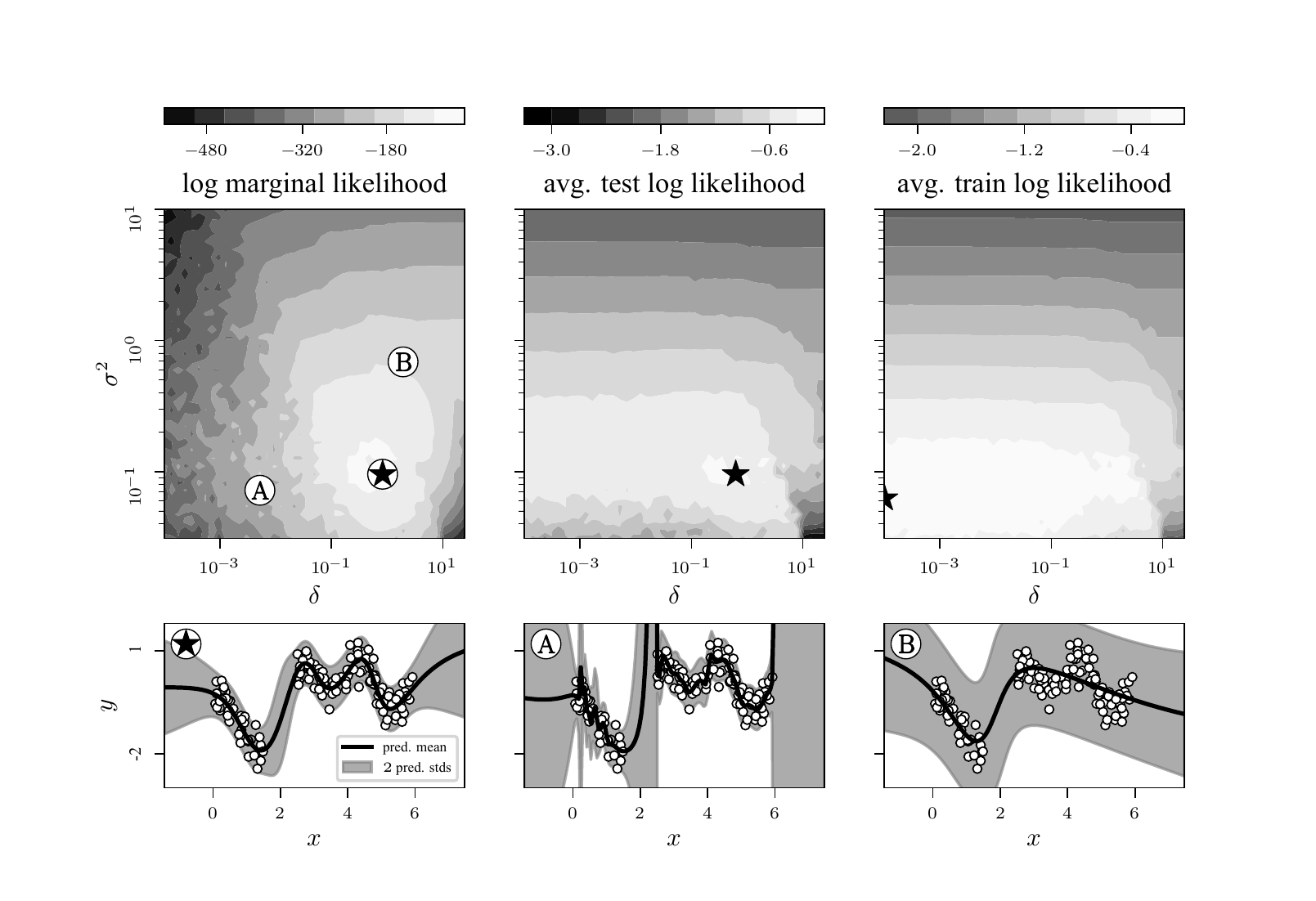}
    \caption{Marginal likelihood with respect to observation noise $\sigma^2$ and prior precision $\delta$ of a neural network model on a toy regression problem with example posterior predictive distributions.
    In the top, the marginal likelihood approximation of a neural network model due to the Laplace-GGN in comparison to average test and train log likelihood is displayed.
    The marginal likelihood provides a robust way to choose hyperparameters and is in line with the test log likelihood.
    In contrast, the optimal likelihood on the training data can go to zero which leads to an overfitting model.
    In the bottom, the optimal posterior predictive due to the marginal likelihood ($\star$) and examples of overfitting (A) and underfitting (B) are visualized.
    We show the posterior predictive mean and two standard deviations.
    }
    \label{fig:reg_marglik}
\end{figure}

\autoref{fig:reg_marglik} shows the marginal likelihoods for different hyperparameters on the regression problem.
The optimal hyperparameters found using the marginal likelihood are very close those that generalize the best according to the test log likelihood.
In contrast, we can see that the neural network can become too expressive and overfit when we have weak regularization.
Weak regularization corresponds to a small prior precision $\delta$ and can lead to a complex predictive function that overfits to individual training data points.
\autoref{fig:reg_marglik} further depicts three generalized linear model predictives: the optimal, an overfitting, and an underfitting model.
We see that the optimal model also visually trades off between complexity and simplicity while the overfitting model is overly complex and fits the noise.
In contrast, the underfitting model fails to match the shape of the underlying data generating function.
We identify the optimal hyperparameters $\delta=0.63$ and $\sigma^2=0.1$.
Notably, the observations are generated with a similar noise variance of $0.09$.

\begin{figure}[t]
    \centering
    \includegraphics[width=\textwidth]{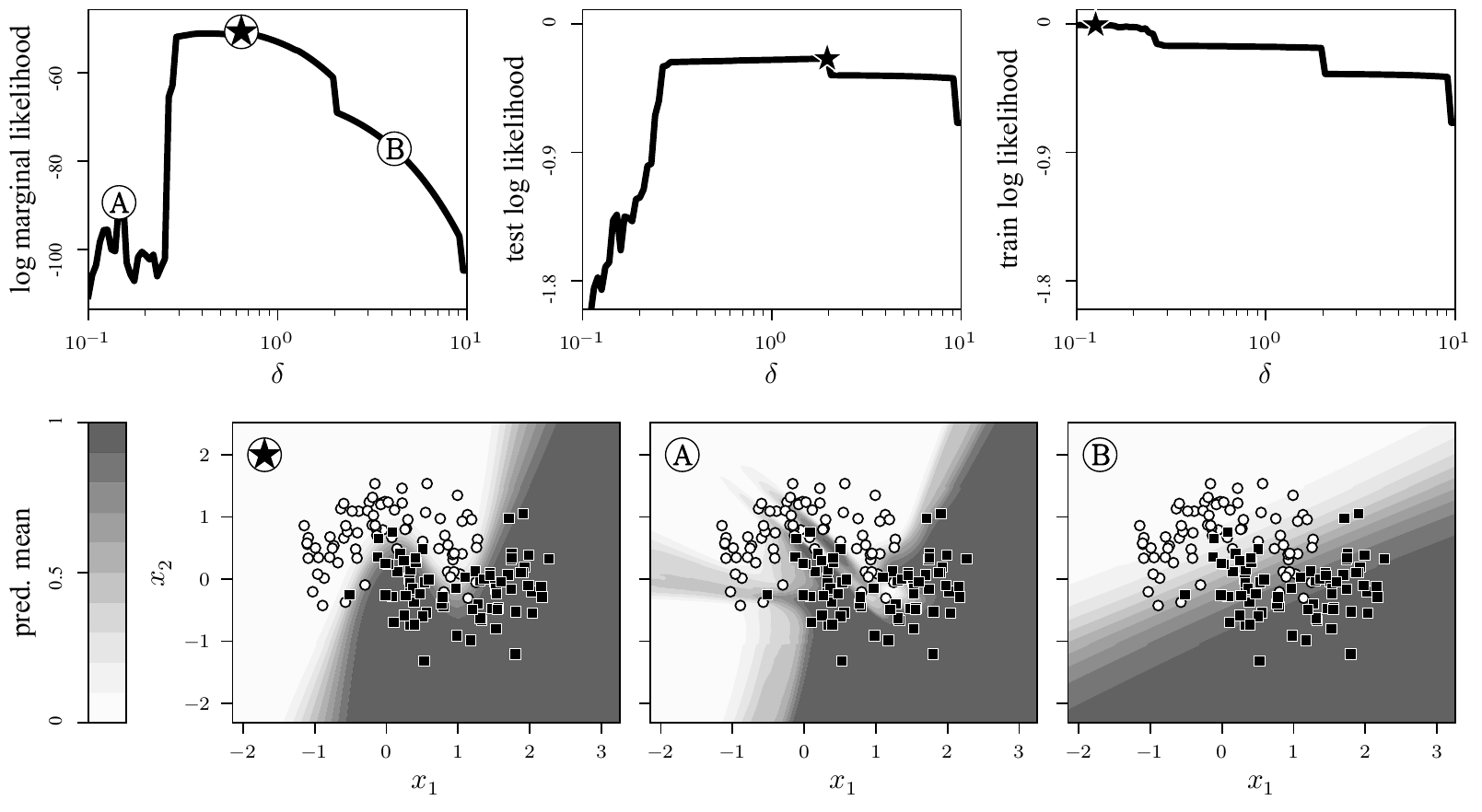}
    \caption{Marginal likelihood with respect to prior precision $\delta$ of a neural network model for classification with examples of the corresponding predictive mean.
    In the top, we compare the marginal likelihood to the log likelihood on the testing and training dataset.
    The marginal likelihood identifies the optimal range of prior precision between $0.3$ and $2$ in line with the test log likelihood.
    In the figure displaying the log likelihood on the training data, the model overfits clearly for small prior precision values.
  In the bottom, the mean of the posterior predictive due to sampling from the generalized linear model is visualized for the model with optimal marginal likelihood ($\star$) and an overfitting (A) and underfitting (B) model.
  The overfitting model achieves almost zero misclassifications in the training data due to an overly complex decision boundary.
    }
    \label{fig:class_marglik}
\end{figure}

In \autoref{fig:class_marglik}, we conduct the same analysis for the classification problem and obtain similar results:
According to the marginal likelihood, the range of optimal hyperparameters lies between $0.02$ and $2$ which matches the plateau of the test log likelihood accurately.
The marginal likelihood identifies the optimal hyperparameter $\delta=0.13$.
The training log likelihood goes to zero for weak regularization indicating a perfect fit and correct prediction of each training data point.
However, both the marginal likelihood and test likelihood reject such an overfitting model.
\autoref{fig:class_marglik} further displays such a model in comparison to the optimal model according to the marginal likelihood and an underfitting model that only represents a linear decision boundary.
As in the regression problem, the optimal model also exhibits the best uncertainty around the decision boundary as the boundary becomes wider away from the data.
In the next section, we focus especially on the properties of the posterior predictive distribution.
We use the optimal hyperparameters identified in this section.

\section{Posterior Predictive Distributions}%
\label{sec:post_pred_exp}

We have introduced three ways to obtain an approximate posterior predictive distribution for a Bayesian neural network:
NN sampling, GLM sampling, and BLR inference (cf. \autoref{sec:predictive_laplace}).
Disentangling the GGN and approximate inference, we posed the hypothesis that only VOGGN can lead to a stable predictive using NN sampling.
We investigate this hypothesis here.
We use the optimal hyperparameters found in the previous section.

We train the neural network models using the three natural-gradient variational inference algorithms introduced in Chapter~\ref{sec:vi}.
We use the step size $\beta=0.999$, a single Monte Carlo sample from the posterior approximation for VOGGN and LGVA per step, and initialize the posterior covariance to $\mSigma=0.1 \mI_P$.
We use the same randomly initialized mean $\vmu$ for all algorithms and train until convergence.
For the Laplace-GGN (L-GGN) algorithm, we again use Adam to obtain a MAP estimate and then apply the Laplace-GGN posterior approximation.
For the NN and GLM sampling posterior predictive approximations, we use $1000$ Monte Carlo samples.
In the regression case, we have a Gaussian likelihood and therefore the GLM coincides with the BLR model.
In the classification case, the GLM corresponds to a Bayesian logistic regression model.

In \autoref{fig:reg_predict}, we display the posterior predictive distributions on the regression task for all algorithms using the NN sampling and GLM/BLR inference method.
Additionally, we display posterior predictive samples to understand not only the marginal mean and variance but also the joint predictive distribution.
The BLR posterior predictive works consistently across all approximate inference methods and provides reasonable uncertainty estimates.
Between the methods, there is no significant difference using the BLR predictive.
In contrast, the NN sampling posterior predictive fails for the Laplace-GGN and the Laplace-like OGGN since the predictive mean is inaccurate and the variance exceedingly high.
The posterior approximation due to LGVA also leads to overestimated predictive uncertainties and an inaccurate mean.
Only VOGGN exhibits a posterior predictive that is reasonable and similar to the GLM variant of the VOGGN posterior approximation.
On this toy example, we can clearly see that we should predict based on the underlying model that we infer.

\begin{figure}[H]
    \centering
    \vspace{-1em}
    \begin{subfigure}[b]{\textwidth}
        \centering
        \includegraphics[width=\textwidth]{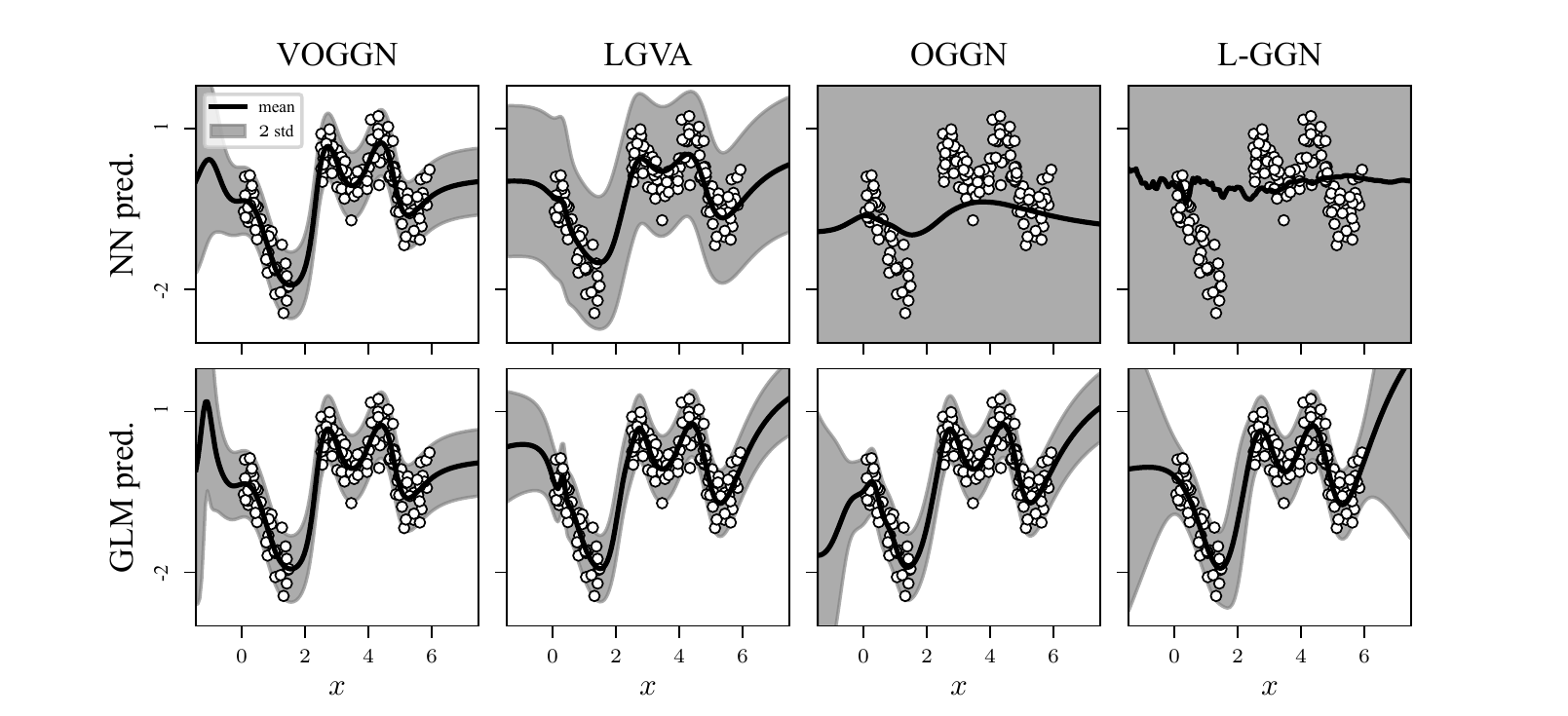}
        \caption{Posterior predictive distribution (NN vs. GLM).}
        \label{fig:reg_predist}
    \end{subfigure}
    \vfill
    \begin{subfigure}[b]{\textwidth}
        \centering
        \includegraphics[width=\textwidth]{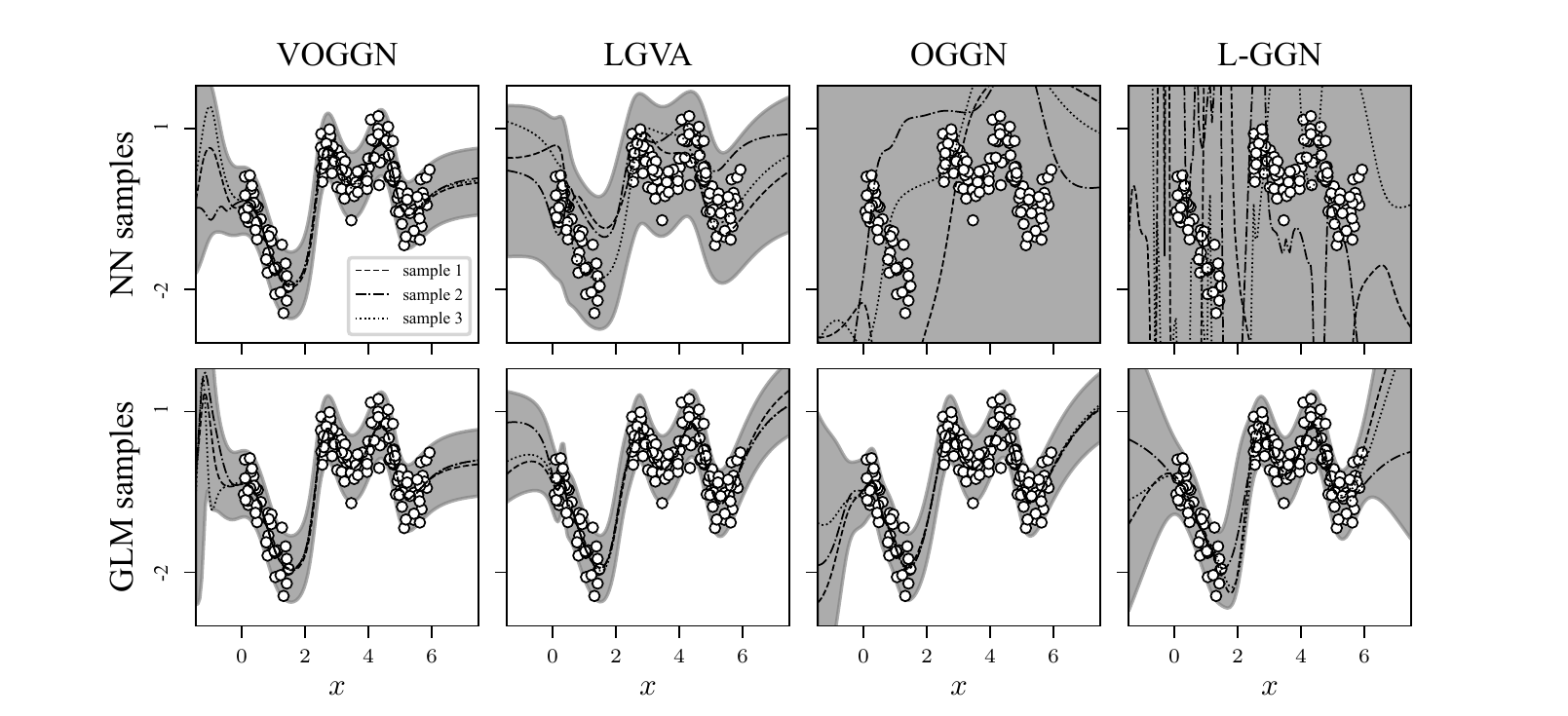}
        \caption{Posterior predictive samples (NN vs. GLM).}
        \label{fig:reg_pred_samples}
    \end{subfigure}
    \caption{Comparison of posterior predictive by NN sampling and GLM sampling for four approximate inference algorithms.
    The top row shows prediction due to NN sampling and the bottom row shows GLM sampling.
    In~(a), we show the predictive mean and standard deviation.
    Figure~(b) displays three posterior predictive samples.
    For the Gaussian likelihood used here, GLM sampling is equivalent to the exact Bayesian linear regression predictive.
    Using the GLM, the predictions are reliable for all algorithms.
    In contrast, only VOGGN gives reasonable results when predicting by sampling neural networks.
    Since LGVA samples like VOGGN, it can still give reasonable predictions using NN sampling.
     OGGN and L-GGN do not work in this case.
    }
    \label{fig:reg_predict}
\end{figure}

\begin{figure}[H]
    \centering
    \begin{subfigure}[b]{\textwidth}
        \centering
        \includegraphics[width=\textwidth]{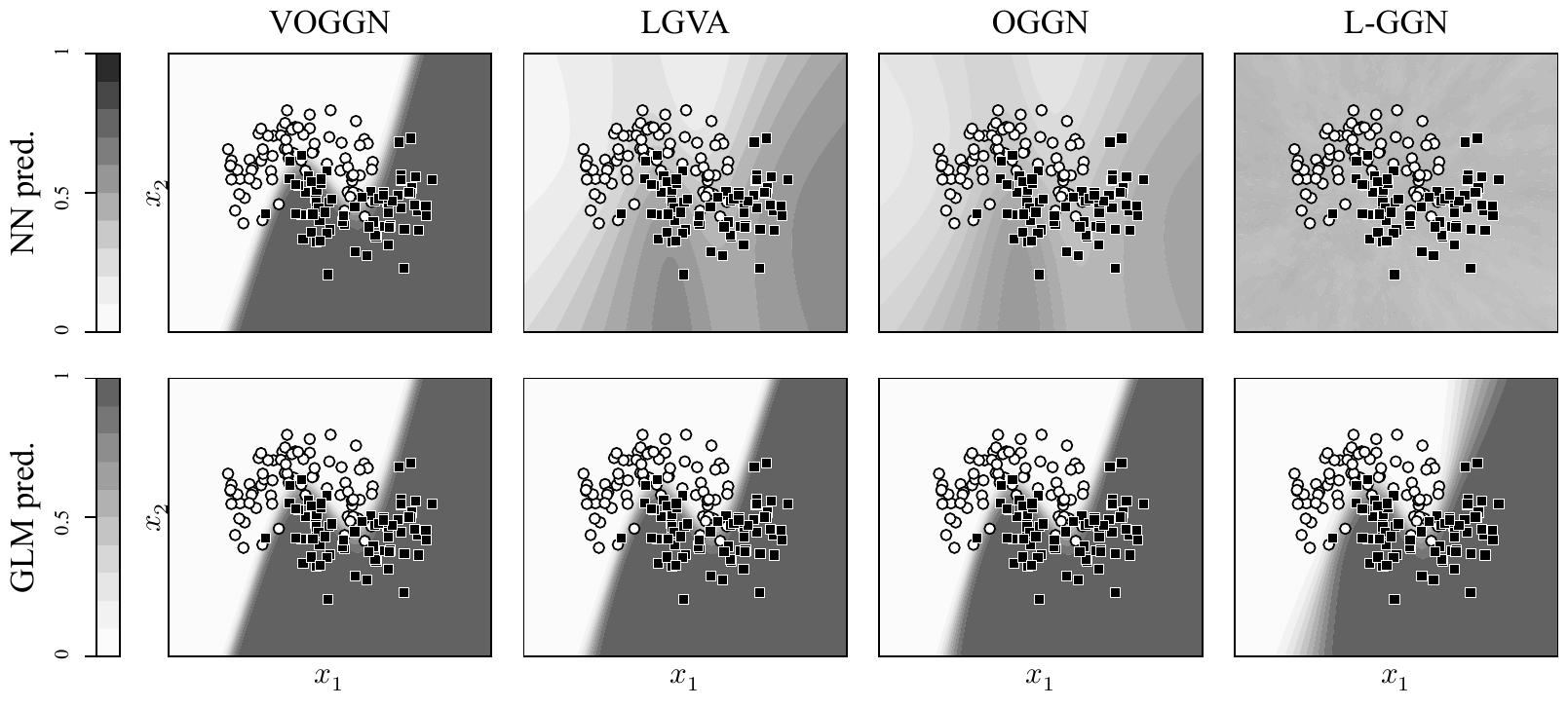}
        \caption{Posterior predictive mean (NN vs. GLM).}
        \label{fig:cls_predist}
    \end{subfigure}
    \vfill
    \begin{subfigure}[b]{\textwidth}
        \centering
        \includegraphics[width=\textwidth]{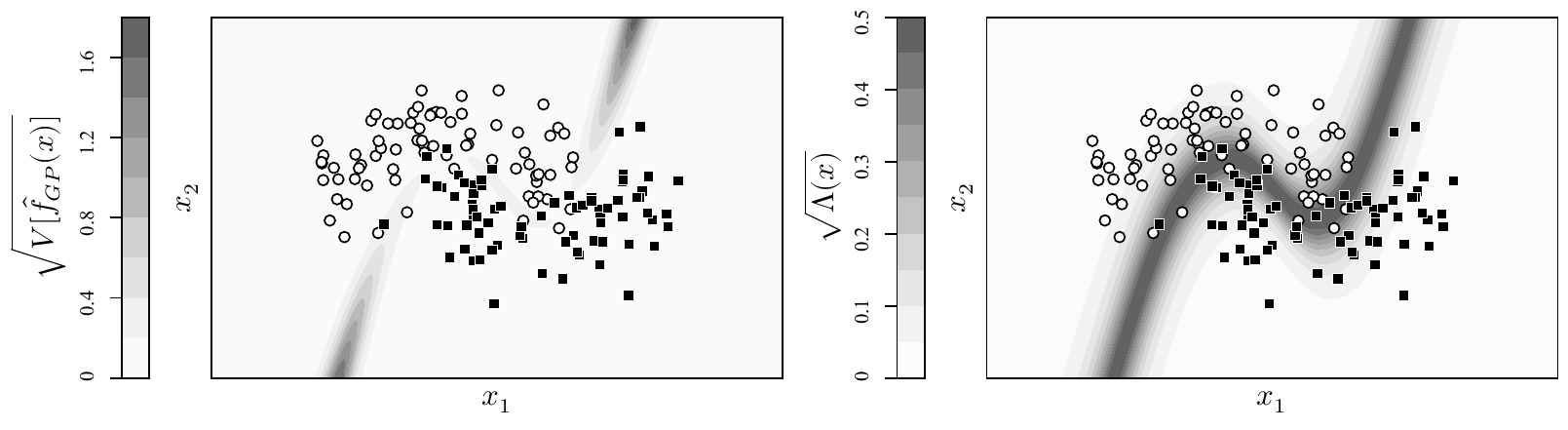}
        \caption{Posterior predictive uncertainty due to Bayesian linear regression.}
        \label{fig:cls_blruct}
    \end{subfigure}
    \caption{Comparison of NN sampling, GLM sampling, and Bayesian linear regression posterior predictive.
    Figure~(a) shows the posterior predictive mean by NN sampling in the top and GLM sampling in the bottom.
    Only VOGGN maintains good performance using NN sampling.
    All four algorithms show optimal performance when we use GLM sampling to predict.
    In Figure~(b), we look into the model uncertainty and observation noise due to the Bayesian linear or GP regression model.
    While the observation noise has simply the variance of the Bernoulli response variables, the model uncertainty on the left grows further away from the data and is relatively low where the decision boundary is supported by data.}
    \label{fig:cls_predict}
\end{figure}

In the classification example, we make similar observations:
\autoref{fig:cls_predist} depicts the posterior predictives of all methods using NN and GLM sampling.
Again, only VOGGN leads to a posterior works for both NN and GLM sampling.
In line with the regression problem, the posterior predictive due to the GLM performs consistently across all methods and provides similar predictive distributions for posterior approximations.
Prediction using NN sampling is again only viable using the VOGGN posterior approximation.
LGVA and OGGN show signs of a decision boundary but the quality is worse than that of the GLM predictive.
For the Laplace-GGN posterior approximation, NN sampling leads to uniform predictions and therefore no decision boundary.
In stark contrast, the corresponding GLM sampling predictive yields an optimal predictive model.

In \autoref{fig:cls_blruct}, we analyze the Bayesian linear regression model that is inferred exactly when we use the Laplace or Gaussian variational approximation.
In particular, we look into the uncertainty in the posterior Gaussian process and the observation noises.
The observation noise corresponds to the variance of the modelled Bernoulli random variable and therefore highlights the decision boundary.
However, the uncertainty in the Gaussian process is low around the decision boundary and instead grows away from the data.
This property could be useful to extend the decision boundary as it is desired for example in \emph{active learning} or \emph{Bayesian optimization}.
Interestingly, the model has high model certainty outside of the data as long as it is far away from the inferred decision boundary.
This is expected since the kernel of neural networks is typically not stationary~\cite{lee2017deep, neal1993probabilistic, williams1998computation}.

The experiments on posterior predictive distributions show clearly that it is important to understand the approximations used in Bayesian deep learning.
Having understood the impact of the GGN on individual approximate inference algorithms, we can choose the right posterior predictive and substantially improve the performance.
In fact, the pathology of prediction with NN sampling using the Laplace-GGN posterior approximation has been already pointed out in the literature~\cite{foong2019between, ritter2018scalable}.
\citet{ritter2018scalable} argued that conditioning and the ratio of data points and parameters leads to this problem.
The GLM sampling method fixes this problem and it turns out that it works reliably for few data points ($N=150$) and comparatively many parameters ($P\geq 2000$).
The problem is therefore not due to conditioning but a predictive procedure that does not align with the inference and approximation methods used.

\section{Function-Space Neural Network Inference for Explainability}%
\label{sec:kernel_explain}

In this section, we use the Gaussian process formulation of the Laplace-GGN approximation to explain neural network predictions.
In \autoref{sec:laplace}, we have shown that the generalized Gauss-Newton gives rise to a generalized linear or generalized Gaussian process model.
Gaussian process models are instance-based learning algorithms, i.e., they make predictions directly based on the training data.
Therefore, we can understand predictions by looking into the training data points responsible for them.
In particular, we try to understand predictions of a convolutional neural network on the binary classification task of distinguishing handwritten digits $4$ and $9$.

The predictive mean of a Gaussian process regression model can be written as an inner product of the kernel between a test point and the training data and an importance vector~\cite{rasmussen2003gaussian}.
We have a vector $\va \in \R^N$ that gives an importance factor to each training data point that depends on the likelihood of the generalized GP model.
Further, we have the kernel vector $\vk \in \R^N$ with entries computed by the kernel $\kappa(\vx_*, \vx_i)$ between a test data point $\vx_*$ and the training dataset.
Then, the predictive mean of the GP posterior mean on a new data point can be written as
\begin{equation}
  \label{eq:gp_post_pred}
  f_* = \sum_{i=1}^N a_i k(\vx_*, \vx_i).
\end{equation}
This allows us to understand the prediction by looking into individual entries of $\va$ and $k(\vx_*, \vx_i)$.
In particular, for a generalized Gaussian process, the \emph{importances} $\va$ can be related to the residuals $\vr(\vy, \vf)$ of the log likelihood, i.e., the first derivative.
In the classification case, we have $a_i = \nabla_\vf \log p(\vy \given \vf(\vx_i))$ where $\vf$ is, for example, our Gaussian process formulation of the neural network and $p$ denotes a Bernoulli likelihood.
The kernel $k(\vx_*, \vx_i)$ quantifies the \emph{similarity} between a test and training data point according to the feature map or kernel function.
Therefore, it allows to identify similar training data points that lead to a particular prediction.

We apply the Laplace-GGN to a convolutional neural network.
We train the neural network on the digits $4$ and $9$ which constitutes the hardest binary classification task on the MNIST dataset~\cite{lecun1998gradient}.
In particular, we randomly select $3000$ samples for training.
The network has 2 convolutional layers each followed by a \texttt{ReLU} activation function and \texttt{MaxPooling}.
The last three layers are linear and also use the \texttt{ReLU} activation function.
In total, the network has $P=4587$ parameters.
We use hyperparameter $\delta=10$. 
Since the Laplace-GGN approximation can be equivalently cast as the Laplace approximation in a generalized Gaussian process model, we obtain for $\va$ the residuals $r(y_i, f(\vx_i; \param_{\textrm{MAP}}))$.
The kernel between a new test and a training data point is given by $\kappa(\vx_*, \vx) = \delta^{-1} \mJ(\vx_*; \param_*) \mJ(\vx;\param_*)$.

In \autoref{fig:kernel_explain}, we analyze both quantities for the binary MNIST problem.
Since the model achieves perfect classification on the training data, the residuals are not bigger than $0.2$ in \autoref{fig:mnist_kernel_alphas}.
The data points with the highest absolute residuals can be understood as boundary points and depict particularly notable examples:
$4$s that look like $9$s and vice versa.
These data points have a high impact in decisions since their residuals play an important role in the predictive formulation in \autoref{eq:gp_post_pred}.
In the middle range, we have ordinary examples that are easy to distinguish.
In \autoref{fig:mnist_kernel}, we display one correct and two incorrect predictions along with examples of training data points and the kernel value between these samples and the test input.
For the incorrect predictions, we find that an example of the opposite class strongly aligns with the test input (see rank 1).
In both cases, these samples are also boundary points that highly influence the decision.
This ultimately leads to a misclassification.

Understanding predictions of neural networks using an instance-based approach can potentially help make decisions more robust, improve explainability, and identify problematic training data.
It would further be interesting to understand how different neural network architectures induce different feature maps and therefore Jacobians.
Potentially, some choices lead to good inductive biases for Bayesian neural networks.

\begin{figure}
    \centering
    \begin{subfigure}[b]{\textwidth}
        \centering
        \hspace{-1em}%
        \includegraphics[width=0.9\textwidth]{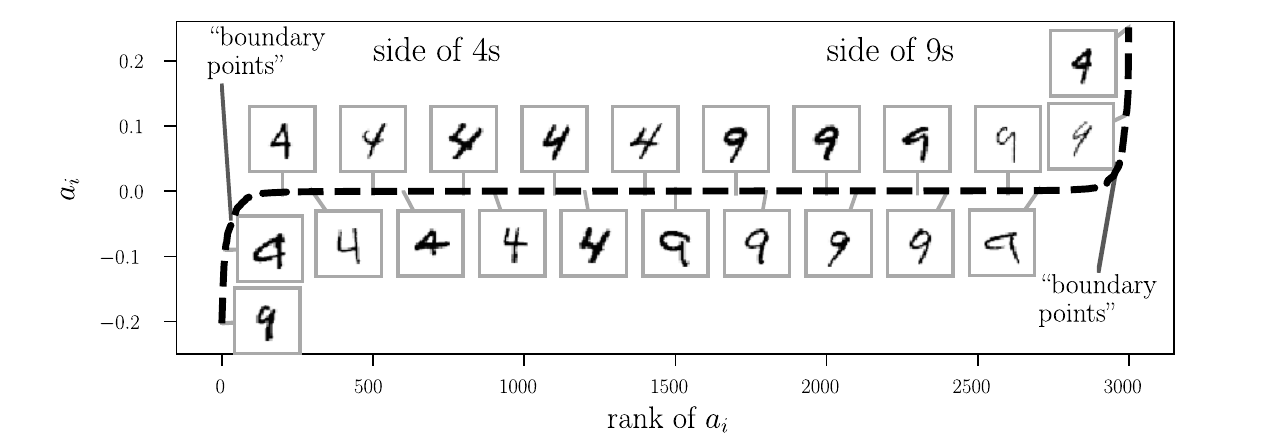}
        \caption{Sorted training data importances $\va$ with examples.}
        \label{fig:mnist_kernel_alphas}
    \end{subfigure}
    \vfill
    \begin{subfigure}[b]{\textwidth}
        \centering
        \vspace{1em}
        \includegraphics[width=\textwidth]{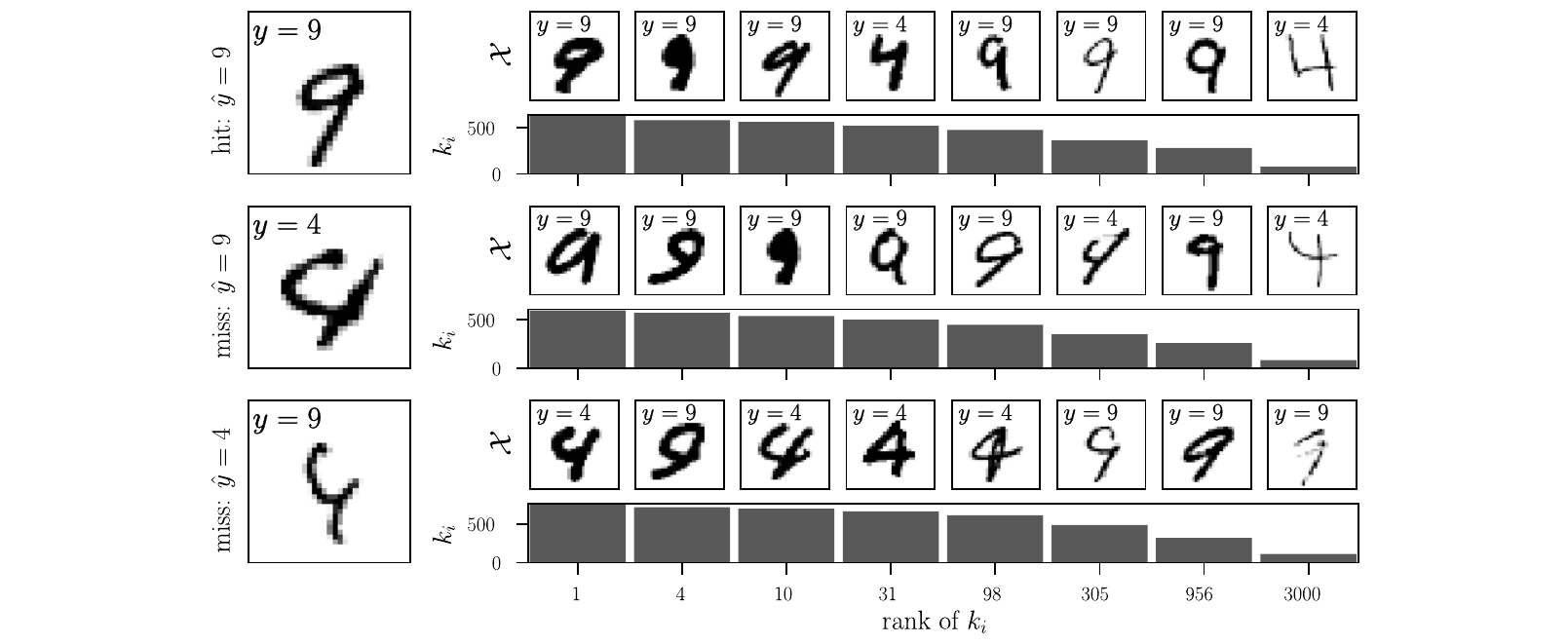}
      \vspace{-0.2em}
        \caption{Classification on test inputs and similarity to training data.}
        \label{fig:mnist_kernel}
    \end{subfigure}
    \caption{Understanding neural network predictions using a Gaussian process view: in Figure~(a), the elements of the importance vector $\va$ are displayed in order and with corresponding examples.
    In the left and right border, we can identify boundary points, i.e., data points that are nearly misclassified:
    in fact, on the left we have $4$s that look like $9$s and vice versa on the right.
    In Figure~(b), the similarity between training and test data points is used to understand predictions.
    On the left, test data points and their predictions $\hat{y}$ are displayed.
    On the right, we list $8$ training data points and their labels together with the corresponding similarity to the test image.
    The kernel vector $\vk_*$ for a test image $\vx_*$ determines the decision together with the vector $\va$.
    We can see that misclassified examples correlate highly with examples of the wrong class due to the learned feature map of the convolutional neural network.
    Further, the misclassified inputs both show high similarity to points at the decision boundary that highly impact the final prediction.
   }
    \label{fig:kernel_explain}
\end{figure}

\chapter{Discussion and Future Directions}
\label{sec:discussion}
In this chapter, we discuss the work related to this thesis and conclude the results with a future outlook.
Throughout this work, we have referred to related literature where appropriate.
Here, all references are summarized in a single place.
The conclusion contains a short review of the presented results and discusses possible directions for future work.

\section{Related Work}%
\label{sec:related_work}

This thesis complements and extends the paper ``Approximate Inference Turns Deep Networks into Gaussian Processes''~\cite{khan2019approximate}.
The focus and standpoint of the present work is different.
We try to disentangle the generalized Gauss-Newton method and approximate inference in Bayesian deep learning to gain theoretical understanding and practical advantages.
In contrast, the prior work focuses on obtaining a Gaussian process representation of neural networks directly from the Bayesian deep learning algorithm~\cite{khan2019approximate}.
In contrast, the present work can therefore identify an intermediary generalized linear and Gaussian process model that proves to be useful to derive new posterior predictive, marginal likelihood, and inference algorithms for Bayesian neural networks.
The Bayesian linear and Gaussian process regression models of both works are equivalent up to reparameterization.
The reparameterization plays a major role to practically apply the identified connection.
\citet{khan2019approximate} identify a linear and GP regression model in a transformed data space, which, in its original form, cannot replace the neural network model.
Next to the two algorithms VOGGN and OGGN introduced by the prior work, the present work additionally introduces the LGVA algorithm that lies between Laplace and variational approximation.
The experiments presented here are different since they serve the purpose of particularly showing the impact of the GGN on approximate inference and are based mostly on the novel generalized linear and Gaussian process model formulations.
The prior work focuses on the Gaussian process regression formulation that is obtained.
Therefore, their marginal  likelihood formulation only works for a Gaussian likelihood.

Recently, there has been a surge of interest in the connection of neural network training or inference and Gaussian processes.
\citet{williams1998computation} and \citet{neal2012bayesian} already connected Bayesian neural networks to Gaussian processes in the $90$s.
In particular for a single hidden layer of infinite width, one could show that, under a Gaussian prior, the neural network function can be characterized as a non-stationary Gaussian process~\cite{williams1998computation}.
This result has recently been extended to other architectures, activations functions, and depths~\cite{lee2017deep}.
The derivation of the neural tangent kernel that characterizes the training of a neural network in function space~\cite{jacot2018neural} has again sparked the interest in relating neural networks and Gaussian processes.
Based on the work of \citet{jacot2018neural}, \citet{lee2019wide} showed that infinite width neural networks can be understood as linearized neural networks and therefore be connected to Gaussian process inference if we pose a prior on the parameters.
They further found empirical evidence that this connection even holds in the finite setting.
In this work, we do not analyze the probabilistic neural network model theoretically but rather the combination of this model with a corresponding practical algorithm.
Notably, this gives us similar results and allows to relate neural network inference with Gaussian process inference.
In contrast, we do not need to take the limits but obtain similar results due to the combination of the GGN and a Gaussian posterior approximation.

The combination of the generalized Gauss-Newton and approximate inference for Bayesian deep learning is very common.
The GGN is mostly applied to approximate the Hessian of the log likelihood.
In some cases, it is however used to approximate the Fisher information matrix required for natural gradient descent.
For a discussion this, we refer the reader to the recent work of \citet{kunstner2019limitations}.
The combination of Laplace approximation and Gauss-Newton is popular and has already been suggested for least-squares regression with neural networks in the $90$s~\cite{foresee1997gauss}.
Modern large-scale Bayesian deep learning algorithms based on the Laplace approximation do not use the full GGN approximation but rather diagonal or factorized variants~\cite{ritter2018scalable} and have successfully been applied to continual learning with neural networks~\cite{ritter2018online}.
The Gaussian variational posterior approximation is more popular than the Laplace approximations as it promises to be more precise~\cite{bishop2006pattern}.
In particular, the combination with natural gradients~\cite{amari1998natural}, efficient and numerically stable inference algorithms are possible~\cite{khan2018fast, osawa2019practical, zhang2018noisy}.
Prior to their work, the GVA for neural networks mostly relied on backpropagation and the reparameterization trick.
The posterior approximation of these algorithms is often unstable and depends heavily on the hyperparameters~\cite{blundell2015weight, foong2019between}.
Notably, the state-of-the-art results obtained in Bayesian deep learning all rely on a combination of Gaussian posterior approximation and generalized Gauss-Newton method~\cite{khan2018fast, mishkin2018slang, osawa2019practical, ritter2018scalable, zhang2018noisy}.
This work is the first to analyze the interplay of the generalized Gauss-Newton and approximate inference in detail.
All above works start from the motivation of a Bayesian deep learning algorithm and use approximations along the derivation of a new algorithm.
In contrast, we discuss the impact of individual approximation choices and the impact on the underlying probabilistic model.
Further, prior work exclusively relies on NN sampling to approximate the posterior predictive which we find to be unreliable in many cases.
The detailed present discussion allows to  fix this problem due to two new  posterior  predictive  distributions that are in line with the inferred model.

Problems with the posterior predictive due to a Laplace-GGN approximation to a neural network are known~\cite{ritter2018scalable, foong2019between}.
Recently, \citet{foong2019between} discovered that the BLR predictive works consistently better than NN sampling for univariate Gaussian likelihoods.
This work supports this observation theoretically and further extends the result to other likelihoods  using the stable GLM sampling prediction method.
Further, we justify the same predictive procedure for Gaussian variational approximations.
The  form of the marginal likelihood based on the Bayesian linear regression model is equivalent to the evidence approximation due to \emph{Occam's razor}~\cite{mackay1995probable}.
The difference is that we propose to compute the model evidence of the underlying GLM and obtain an additional Gaussian process variant.

The seminal work by \citet{wedderburn1974quasi} studies the impact of the generalized Gauss-Newton method on maximum likelihood estimation for generalized linear models.
In particular, he finds that an optimization step using the Gauss-Newton method of a non-linear model with GLM likelihood can be cast as a least-squares regression problem.
This least-squares regression problem specifies an adjusted noise variance similar to the one we find here.
He was the first to generalize the Gauss-Newton algorithm to GLM likelihoods and therefore  defined the generalized Gauss-Newton algorithm.
He shows that the generalized Gauss-Newton method applied to maximum-likelihood estimation requires iterative solutions of a least-squares regression problem.
In our case we have a similar Bayesian linear regression model that further allows computation of other quantities and can be related to Gaussian processes.
While we have specifically  focused on Bayesian neural networks, the results presented here also hold for general parametric functions $\vf(\vx;\param)$ that are at least once differentiable in the parameter $\param$.
This is precisely the case \citet{wedderburn1974quasi} worked with.

\section{Conclusion}%
\label{sec:conclusion}

In this thesis, we have disentangled the generalized Gauss-Newton and approximate inference methods in Bayesian deep learning algorithms.
The individual analysis of both methods has shed new light on these algorithms:
the generalized Gauss-Newton algorithm turns the neural network model into a local generalized linear model.
Further, approximating the posterior of the GLM requires the exact solution to a Bayesian linear regression problem which gives us the posterior approximation to the neural network model.
We have made use of this new understanding to improve the posterior predictive of Bayesian neural networks, enable empirical Bayes for tuning hyperparameters using the generalized linear model formulation, and identify a function-space posterior approximation for neural networks.
The theoretical findings are supported by experiments on simple toy datasets that enable detailed investigation.
The present work enables future research in quantifying the accuracy of approximate Bayesian inference using the two identified stages and in applying  the derived quantities based on the underlying generalized linear and Gaussian process models to real and larger datasets.

For the Laplace approximation, we have shown that the common combination with the generalized Gauss-Newton optimization method can be understood in two stages:
the first stage consists of obtaining a MAP estimate of the neural network and using the generalized Gauss-Newton that implicitly linearizes the neural network at the MAP.
This turns the neural network model into a generalized linear model.
The second step is to apply the Laplace approximation.
We have found that the Laplace approximation implicitly moment-matches the generalized linear model likelihood to a Gaussian likelihood.
This turns the GLM into a Bayesian linear regression model which can be solved exactly and gives us the neural network posterior approximation.
Both intermediary models, the generalized linear and Bayesian linear regression model, have Gaussian process counterparts that have the same marginal likelihood and posterior predictive.
This connection allows a function-space Laplace-GGN posterior approximation which is useful when the number of parameter greatly exceeds the number of data points.
Approximate inference is therefore conducted in the underlying  generalized linear or Gaussian process model.
Hence, we have argued that these models should be used for the posterior predictive and marginal likelihood computation.
The regression models that we need to solve exactly also provide robust predictions but have the wrong likelihood.
In particular, we presented the GLM sampling and closed-form BLR predictive distributions that complement the common NN sampling method.
The experiments provide practical evidence that the GLM sampling method provides more consistent posterior distributions than NN sampling.
Further, selecting hyperparameters using the marginal likelihood approximation to the GLM provides a way to optimize neural network hyperparameters only based on the training data.
Lastly, we have shown that the Gaussian process view can be useful for understanding the predictions of neural networks.

The generalized Gauss-Newton method is used in many recent variational inference algorithms for Bayesian deep learning and it is therefore critical to understand its impact.
Variational inference differs from the Laplace approximation in one critical point:
we have to compute an expectation to obtain the updates or characterize stationarity (cf. \autoref{sec:ABI}).
In practice, we approximate this expectation by sampling.
Therefore, we identified two ways to make use of the GGN:
we either sample first and then apply the GGN or vice versa.
Applying the GGN first led to the newly introduced LGVA algorithm.
Sampling first led to the VOGGN algorithm that we analyzed as a prototype for most variational algorithms for neural networks.
The third algorithm, OGGN, is a crude approximation to the LGVA that avoids  sampling.
We obtained the following understanding of the LGVA algorithm:
using the GGN first leads to a generalized linear model as for the Laplace approximation.
A variational inference step can then be formulated as a Bayesian linear or Gaussian process regression model.
This model made clear that the LGVA algorithm does not sample neural networks but only generalized linear models.
We have therefore argued for the GLM sampling method for the posterior predictive.
For the VOGGN algorithm we found that that underlying regression problem we solve in each step is specified by different neural network feature maps.
This corresponds to sampling neural networks and led to the conclusion that the posterior approximation inferred with VOGGN can make use of the NN sampling posterior predictive.
Experimentally, we found results that support this hypothesis.
In fact, VOGGN provides the only posterior approximation that allowed to use NN sampling and obtain reasonable predictions consistently.
Interestingly, we could not determine the single best algorithm in the experiments.
Both the Laplace and variational algorithms provide equally good  predictions and uncertainty estimates using the GLM sampling method introduced here.

\section{Future Work}%
\label{sec:future_work}

The results presented in this thesis can be used for future theoretical or applied research.
On the theoretical side, the isolation of the generalized Gauss-Newton method and approximate inference in Bayesian deep learning could be useful to investigate convergence, priors, and feature maps.
The experiments presented here only serve  the purpose of understanding the theoretical results and validate hypotheses.
An interesting future direction would therefore be to investigate these results on real or larger-scale data and for other applications.

The individual understanding of the generalized Gauss-Newton and approximate inference for Bayesian deep learning could be useful to jointly specify how accurate these methods really are.
In particular, the disentangled understanding also sheds new light on the prior in Bayesian deep learning.
In the Laplace approximation, it serves as a regularizer during MAP estimation and as a distribution in approximate inference.
However, since we infer a GLM due to the GGN, the prior potentially plays another role during inference.
Since the MAP plays an important role, research on the \emph{loss landscape} of neural networks could be useful to understand Bayesian deep learning methods better.
Another future direction is to analyze the forms of Jacobians that different architectures and activation functions give rise to.
This could help to better understand the GLM or GP kernel that we obtain during approximate inference.
Lastly, it is important to understand the further diagonal or factorized approximation applied to the GGN or posterior covariance approximation.
The impact of this further approximation is topic of many studies but has not been fully solved yet.

The transformation from probabilistic neural networks to simpler linear and Gaussian process models is potentially useful for many applications.
For linear models, many interesting quantities can be computed efficiently and in a numerically stable way.
One particularly exciting avenue is to use the marginal likelihood of the underlying linear model to tune neural network hyperparameters.
The marginal likelihood is traditionally one of the arguments for a Bayesian approach but its application to model selection has been rarely used in Bayesian deep learning.
Here, we have shown that this method works well on toy problems but it is important to scale it to more data points and more parameters.
This is also where the Gaussian process variant might be useful:
for a reasonably sized dataset $(\sim 10^4)$, we can conduct a function-space posterior approximation for an arbitrarily large neural network.
This function space approximation leads to the same posterior predictive as a full posterior covariance in the parametric space.
I believe that this could also enable interesting applications in \emph{transfer learning}.
Lastly, the GLM sampling and exact BLR posterior predictives for Bayesian neural networks introduced in this work seem to provide robust uncertainty estimates and, across methods, work better than traditional NN sampling.
It is important to investigate this behavior on other larger datasets.
If this observation is consistent, it could enable better performance on applications that require good uncertainty estimates like \emph{active learning} or \emph{Bayesian optimization} using neural networks.
The closed-form functional form of the posterior due to the Gaussian process connection could also be useful for regularizing neural networks in the function space as opposed to the parameter space.

\cleardoublepage
\phantomsection
\addcontentsline{toc}{chapter}{Bibliography}
\bibliography{thesis.bib}

\appendix
\chapter{Proof of GLM Log Likelihood Derivatives}%
\label{cha:proof_of_glm_log_likelihood_derivatives}

\GLMS*

\begin{proof}
We will present a short proof for continuous distributions with scalar natural parameter and label of the form in \autoref{eq:glm_likelihood_id}.
The applied steps directly extend to discrete distributions and become more tedious for multi-dimensional distributions but are analogous.
We being with the first derivative with respect to the natural parameter:
since $p(y \given f)$ is a probability density that integrates to $1$, we can rewrite \autoref{eq:glm_likelihood_id} as $A(f)=\log \int h(y) \exp \crl{y^\top f} d y$. We start with the first derivative:
\begin{align*}
  \frac{\partial \log p(y \given f)}{\partial f}
  &= y - \frac{\partial A \rnd{f}}{\partial f}
  = y - \frac{\partial}{\partial f}  \log \int h(y) \exp \crl{y f} d y \\
  &= y - \frac{\frac{\partial}{\partial f} \int h(y) \exp \crl{y f} d y}{\exp{A(f)}}
  = y - \frac{\int \frac{\partial}{\partial f} h(y) \exp \crl{y f} d y}{\exp{A(f)}} \\
  &= y - \frac{\int y h(y) \exp \crl{y f} d y}{\exp{A(f)}}
  = y - \myexpect \sqr{Y}.
\end{align*}
We used the dominated convergence Theorem to exchange integral and differentiation in the second line.
For the second derivative, we have
\begin{align*}
  \frac{\partial^2 \log p(y \given f)}{\partial f^2}
  &= \frac{\partial}{\partial f} \rnd{y - \myexpect \sqr{Y}}
  =  - \frac{\partial}{\partial f} \int y h(y) \exp{\crl{y f - A(f)}} dy \\
  &= - \int \frac{\partial}{\partial f} y h(y) \exp{\crl{y f - A(f)}} dy
  = - \int y h(y) \exp{\crl{y f - A(f)}} \rnd{y - \frac{\partial A(f)}{\partial f}} dy \\
  &= \rnd{\int y h(y) \exp{\crl{y f - A(f)}} dy}^2 - \int y^2 h(y) \exp{\crl{y f - A(f)}} dy  \\
  &= \myexpect \sqr{Y}^2 - \myexpect \sqr{Y^2}
  = - \myvar \sqr{Y}.
\end{align*}
\end{proof}

\chapter{Completing the Square}%
\label{cha:complete_square}

In the proofs, one of the main techniques used is to simply compute the square which leads to a simplification or useful relation.
Here, we quickly elaborate on what that means.
We assume a symmetric matrix $\mA \in \R^{D\times D}$ that is positive semi-definite, i.e., we can use a pseudo-inverse if necessary.
Further, we have vectors $\vx, \vb \in \R^D$.
Then, we have the following identity:
\begin{equation}
  \label{eq:complete_square}
  \frac12 \vx^\top \mA \vx + \vx^\top \vb = \frac12 \rnd{ \vx + \mA^{-1}\vb }^\top \mA \rnd{ \vx + \mA^{-1}\vb } - \frac12 \vb^\top \mA^{-1} \vb.
\end{equation}
This is exactly the form it shows up in the proofs.
We can always use the pseudo-inverse here.
Let $\mA^+$ be the pseudo-inverse of $\mA$.
Then, this holds because of the two properties (1) $\mA \mA^{+} \mA = \mA$ and (2) $\mA^+ \mA \mA^+=\mA^+$.

\chapter{Natural Gradient Variational Inference}%
\label{ch:ngvi}

For this section, we will make use of some equivalences in NGVI.
To do so, we need two tightly connected parameterizations of the Gaussian posterior approximation:
the natural and expectation parameterization.
We denote the first and second natural parameter by $\natparam, \natparas$ and the mean parameters $\exparam, \exparas$, respectively:
\begin{align}
  \natparam = \mSigma^{-1} \vmu \quad &\textrm{and} \quad \natparas = - \frac12 \mSigma^{-1}, \label{eq:gauss_nat_app}\\
  \exparam = \vmu \quad &\textrm{and} \quad \exparas = \vmu \vmu^\top + \mSigma. \label{eq:gauss_mean_param}
\end{align}
The ELBO can equivalently be rewritten in terms of these parameters.

In contrast to vanilla gradient descent, natural gradient descent uses the underlying information geometry as opposed to the euclidean geometry to update parameters.
That means, we adjust the parameters of our Gaussian posterior approximation in the space of distributions and not in the euclidean space of the parameters of our Gaussian.
Here, we will derive natural variational inference in the natural parameter space.
Let $\veta$ the natural and $\vphi$ the expectation parameters and $\mF(\veta) = \Cov_q \sqr{ \nabla_\veta \log q (\param)}$ the Fisher information matrix of our approximating distribution $q$~\cite{hensman2012fast, zhang2018noisy}.
Then, natural gradient variational inference in natural parameter space with step size $\gamma$ is governed by the following dynamics:
\begin{equation}
  \label{eq:ngvi_update}
  \veta_{t+1} = \veta_{t} + \gamma \mF(\veta)^{-1} \nabla_\veta \elbo(\veta) = \veta_t + \gamma \nabla_\vphi \elbo(\vphi).
\end{equation}
Since the natural gradient in one parameterization provides the gradient in the other~\cite{hensman2012fast}, we have the second equality.
This allows us to derive the updates for our models in natural parameter space without computing the Fisher information.
However, computing gradients with respect to expectation and natural parameters can be inconvenient (especially for backpropagation).
Therefore, it is useful to further use the chain-rule and express the expectation parameter gradients in terms of gradients wrt. $\vmu, \mSigma$.
We have $\vmu = \exparam$ and $\mSigma = \exparas - \sqr{\exparam}^2$.
Therefore, we can simply write using the chain-rule:
\begin{align}
  \nabla_{\exparam} \elbo(\exparam, \exparas) &= \nabla_\vmu \elbo(\vmu, \mSigma) - 2\nabla_\mSigma \elbo(\vmu, \mSigma) \exparam \label{eq:exparam_grad} \\
  \nabla_{\exparas} \elbo(\exparam, \exparas) &= \nabla_\mSigma \elbo(\vmu, \mSigma). \label{eq:exparas_grad}
\end{align}
We are therefore left with the two gradients with respect to the original parameters to obtain the final update.
Recall the form of the ELBO in \autoref{eq:gva_elbo} and use the closed-form derivatives of the KL-divergence of the prior $p$ from $q$:
\begin{align}
  \nabla_\vmu \elbo(\vmu, \mSigma) &= \nabla_\vmu \myexpect_q \sqr{ \log p(\data \given \param) } + \mSigma_0^{-1} \vmu_0 - \mSigma_0^{-1}\vmu  \\
    \nabla_\mSigma \elbo(\vmu, \mSigma) &= \nabla_\mSigma \myexpect_q \sqr{ \log p(\data \given \param) } +\frac12 \mSigma^{-1} - \frac12 \mSigma_0^{-1}.
\end{align}
The gradients with respect to the expected log-likelihood can be rewritten using Bonnet's and Price's Theorems as shown in \autoref{sub:gva}.
However, especially for the gradient with respect to $\mSigma$, different further approximations are possible and yield different algorithms.
Therefore, the next sections will deal with the remaining term and analyze the update in detail.

We obtain the final NGVI in natural parameter space updates by plugging \autoref{eq:exparam_grad} and \autoref{eq:exparas_grad} into \autoref{eq:ngvi_update}.
We take the gradient at iterate $\vmu_t, \mSigma_t$ and write the natural parameter updates:
\begin{align}
\begin{split}
  \mSigma_{t+1}^{-1} \vmu_{t+1} &= \mSigma_t^{-1} \vmu_t + \gamma \sqr{ \mSigma_0^{-1}\vmu_0 - \mSigma_t^{-1} \vmu_t + \nabla_\vmu \myexpect \sqr{ \log p(\data \given \param) } - 2 \nabla_\mSigma \myexpect \sqr{ \log p(\data \given \param) } \vmu_t } \\
  &= (1 - \gamma)\mSigma_t^{-1} \vmu_t + \gamma \mSigma_0^{-1}\vmu_0 + \gamma \sqr{  \nabla_\vmu \myexpect \sqr{ \log p(\data \given \param) } - 2 \nabla_\mSigma \myexpect \sqr{ \log p(\data \given \param) } \vmu_t  } \label{eq:first_nat_update_app}
\end{split}\\
\begin{split}
  - \frac12 \mSigma_{t+1}^{-1} &= - \frac12 \mSigma_{t}^{-1} + \gamma \sqr{ \frac12 \mSigma_t^{-1} - \frac12 \mSigma_0^{-1} + \nabla_\mSigma \myexpect \sqr{ \log p(\data \given \param) }  } \\
  &= (1 - \gamma) \sqr{ - \frac12 \mSigma_{t}^{-1} } + \gamma \sqr{ - \frac12 \mSigma_{0}^{-1}  } + \gamma \nabla_\mSigma \myexpect \sqr{ \log p(\data \given \param) }. \label{eq:secnd_nat_update_app}
\end{split} \end{align}
This form makes apparent that we \emph{combine} the current posterior approximation with the prior usually with a convex combination ($\gamma \leq 1$) of their natural parameters.
The data dependency comes solely via first and second derivative of the expected log likelihood under the approximating distribution.
Another way of writing above update is therefore to identify natural parameters of $p$, $q$ with parameters at iteration $t$, and those arising from gradients of the expected log likelihood~\cite{khan2017conjugate}.
We denote by $q_t(\param)$ the posterior approximation with parameters at iteration $t$.
Then, we can write with another natural parameter $\widetilde{\veta}$ and sufficient statistics $T(\param)$
\begin{equation}
\label{eq:dist_update}
  q_{t+1}(\param) \propto q_{t}(\param)^{(1-\gamma)} p(\param)^\gamma e^{\gamma T(\param)^\top \widetilde{\veta}}.
\end{equation}
Clearly, the natural parameter $\widetilde{\veta}$ is given by the gradient terms in above updates.
We can easily write the product of $q_t$ and $p$ as another Gaussian and in fact this will be our intermediary prior.
The last exponential term will vary depending on how we compute the gradients.

\end{document}